\newcommand{\BEAS}{\begin{eqnarray*}}
\newcommand{\EEAS}{\end{eqnarray*}}
\newcommand{\BEQ}{\begin{equation}}
\newcommand{\EEQ}{\end{equation}}
\newcommand{\BIT}{\begin{itemize}}
\newcommand{\EIT}{\end{itemize}}
\newcommand{\eg}{{\it e.g.}}
\newcommand{\ie}{{\it i.e.}}
\newcommand{\cf}{{\it cf. }}
\newcommand{\reals}{{\mbox{\bf R}}}
\newcommand{\Expect}{\mathbf{E}}
\newcommand{\prob}{\mathbf{Prob}}
\newcommand{\var}{\mathop{\bf var}}
\newcommand{\V}{\mathop{\bf Var}}
\newcommand{\Var}{\mathop{\bf var{}}}
\newcommand{\dom}{\mathop{\bf dom}}
\newcommand{\argmax}{\mathop{\rm argmax}}
\newtheorem{theorem}{Theorem}
\newtheorem{remark}{Remark}
\newtheorem{assumption}{Assumption}
\newtheorem{lemma}[theorem]{Lemma}
\newtheorem{corollary}[theorem]{Corollary}
\newtheorem{proposition}[theorem]{Proposition}
\theoremstyle{definition}
\long\def\@makecaption#1#2{
   \vskip 9pt
   \begin{small}
   \setbox\@tempboxa\hbox{{\bf #1:} #2}
   \ifdim \wd\@tempboxa > 5.5in
        \begin{center}
        \begin{minipage}[t]{5.5in}
        \addtolength{\baselineskip}{-0.95pt}
        {\bf #1:} #2 \par
        \addtolength{\baselineskip}{0.95pt}
        \end{minipage}
        \end{center}
   \else
	\hbox to\hsize{\hfil\box\@tempboxa\hfil}
   \fi
   \end{small}\par
}
\newcounter{oursection}
\newcounter{lecture}
\newcommand{\norm}[1]{{\left\vert\kern-0.25ex\left\vert\kern-0.25ex\left\vert #1
    \right\vert\kern-0.25ex\right\vert\kern-0.25ex\right\vert}}
\title{Sample Efficient Reinforcement Learning with  REINFORCE}
\author{
    Junzi Zhang\textsuperscript{\rm 1}, Jongho Kim\textsuperscript{\rm 2}, Brendan O'Donoghue\textsuperscript{\rm 3}, Stephen Boyd\textsuperscript{\rm 2}
}
\begin{document}

\date{\today}
\maketitle

\begin{abstract}
Policy gradient methods are among
the most effective methods for large-scale reinforcement learning,
and their empirical success has
prompted several works that develop the
foundation of their global convergence theory. However, prior
works have either required exact gradients or
state-action visitation measure based mini-batch stochastic gradients with a
\textit{diverging} batch size, which limit their applicability in practical scenarios.
In this paper, we consider
classical policy gradient methods that compute an approximate gradient with
a \textit{single} trajectory or a \textit{fixed size} mini-batch of
trajectories under soft-max parametrization and log-barrier regularization,
along with the
widely-used REINFORCE gradient estimation procedure.
By controlling the number of ``bad'' episodes and 
resorting to the classical doubling trick, 
we establish an anytime sub-linear
high probability regret bound as well as 
almost sure global convergence of the average regret 
with an asymptotically sub-linear rate. 
These provide the first set of global convergence and sample efficiency results
for the well-known REINFORCE algorithm and contribute to a better understanding
of its performance in practice.
\end{abstract}

\section{Introduction}
In this paper, we study the global convergence rates of the
REINFORCE algorithm \cite{REINFORCE} for episodic reinforcement learning.
REINFORCE is a vanilla  
 policy gradient method that computes a stochastic
approximate gradient with a single trajectory or a fixed size
mini-batch of trajectories with particular choice of gradient estimator, where we 
use `vanilla' here to disambiguate the
method from more exotic variants such as natural policy gradient methods. 
REINFORCE and its variants are among the most widely used policy
gradient methods in practice due to their good empirical performance and
implementation simplicity
\cite{mnih2014neural, gu2015muprop, zoph2016neural, rennie2017self, 
guu2017language, johnson2017inferring, 
yi2018neural, 
kool2018attention, 
kool2020estimating}. 
Related methods include the actor-critic family \cite{actor-critic, A3C} and
deterministic and trust-region based variants \cite{DPG, PPO, TRPO}.

The theoretical results for policy gradient methods have, up to recently, been
restricted to convergence to local stationary points \cite{kakade_2019}.
Lately, a series of
works have established \emph{global} convergence results. 
These recent developments
cover a broad range of issues including global optimality characterization
\cite{PG_LQR, russo_PG}, convergence rates \cite{zhang2019global,
softmax_PG, russo_note_2020, cen2020fast}, the use of
function approximation \cite{kakade_2019, neural_PG, fu2020single},
and efficient exploration \cite{PC-PG} (for more details, 
see the related work section, which we defer to Appendix \ref{related_work} 
due to space limits). 
Nevertheless,
prior work on vanilla policy gradient methods  
either requires exact and deterministic 
policy gradients or 
only guarantees convergence up to $\Theta(1/M^p)$ with a fixed
mini-batch
size $M>0$ of trajectories collected when performing a single update 
(where $p>0$ is $1/2$ in most cases),
while global convergence is only achieved when the batch size $M$
goes to infinity. By contrast, practical implementations of
policy gradient methods typically use
either a single or a fixed number of sample trajectories, which
tends to perform well.
In addition, prior theoretical results 
(for general MDPs) 
have used the 
state-action visitation measure based gradient estimation (see \eg,
\cite[(3.10)]{neural_PG}), which 
are typically not used in practice.

The main purpose of this paper is to bridge this gap between theory and
practice. We do this in two major ways. First, we derive performance bounds
for the case of a fixed mini-batch size, rather than requiring diverging size. Second,
we remove the need for the state-action visitation measure based gradient,
instead using the REINFORCE gradient estimator.
It is nontrivial to go from a diverging mini-batch size to a fixed one.
In fact, by allowing for an arbitrarily large batch size, existing works in the literature
were able to make use of IID samples to decouple the analysis into deterministic
gradient descent/ascent and error control of stochastic gradient estimations. In
contrast, with a single trajectory or a fixed batch size, such a
decoupling is no longer feasible. In addition, the state-action visitation measure
based gradient estimations are unbiased and unbounded, while REINFORCE
gradient estimations are biased and bounded. Hence a key to the analysis is to
deal with the bias while making better use of the boundedness. Our analysis not
only addresses these challenges, but also leads to convergence results in almost
sure and high probability senses, which are stronger than the expected convergence
results that dominate the literature (for vanilla policy gradient methods).
We also emphasize that the goal of this work is to provide a deeper understanding
of a widely used algorithm, REINFORCE, with little or no modifications, rather than
tweaking it to achieve near-optimal performance bounds.
Lastly, our analysis is not the complete picture and several
open questions about the performance of policy gradient methods remain.
We discuss these issues in the conclusion.

\subsection{Contribution} 
Our major contribution can be summarized as follows.
We establish the first set of global convergence results for the REINFORCE
algorithm. 
In particular,  
we establish an anytime sub-linear   
high probability 
regret bound as well as almost sure global convergence of the average regret
with an asymptotically sub-linear rate 
  for REINFORCE, showing that the algorithm is sample efficient (\ie, with 
  polynomial/non-exponential complexity). 
To our knowledge, these (almost sure and high probability) results are
stronger than existing 
global convergence 
results for (vanilla) policy gradient methods in the
literature. 
Moreover, our convergence results
remove the non-vanishing $\Theta(1/M^p)$ term (with $M>0$ being the mini-batch size
of the trajectories and $p>0$ being some constant exponent)
and hence show for the first time that policy gradient
estimations with a single or finite number of trajectories also enjoy global
convergence properties. 
Finally, the widely-used REINFORCE gradient
estimation procedure is studied, as opposed to the state-action visitation measure
based estimators typically studied in the literature but rarely used in
practice.

\section{Problem setting and preliminaries} \label{setting_prelim}
Below we begin with our problem setting and some preliminaries on MDPs and
policy optimization. For brevity we restrict ourselves to the stationary
infinite-horizon discounted setting. We briefly  
discuss potential extensions beyond this setting in \S \ref{conclusion}.

\subsection{Problem setting}
We consider a finite MDP 
$\mathcal{M}$, which is
characterized by a finite state space $\mathcal{S}=\{1,\dots,S\}$, a finite action
space $\mathcal{A}=\{1,\dots,A\}$, a transition probability $p$ (with $p(s'|s,a)$
being the probability of transitioning to state $s'$ given the current state $s$
and action $a$), a reward function $r$ (with $r(s,a)$ being the instantaneous
reward when taking action $a$ at state $s$), a discount factor $\gamma\in[0,1)$
and an initial state distribution $\rho\in\Delta(\mathcal{S})$.  Here $\Delta(\mathcal{X})$
denotes the probability simplex over a finite set $\mathcal{X}$.
A (stationary, stochastic) policy $\pi$  is a mapping from $\mathcal{S}$ to
$\Delta(\mathcal{A})$. We will use $\pi(a|s)$, $\pi(s,a)$ or $\pi_{s,a}$ alternatively
to denote the probability of taking action $a$ at state $s$ following policy $\pi$.
The policy $\pi$ can also be viewed as an $SA$ dimensional vector in
\BEQ\label{pi}
\begin{split}
\Pi=\Big\{\pi\in \reals^{SA}\,\Big|\,&\sum\nolimits_{a=1}^{A}\pi_{s,a}=1\,
(\forall s\in \mathcal{S}),\,\\
&\pi_{s,a}\geq 0\, (\forall s\in \mathcal{S},\,a\in \mathcal{A})\Big\}.
\end{split}
\EEQ
Notice that here we use the double indices $s$ and $a$ for notational convenience.
We use $\pi(s,\cdot)\in\reals^A$ to denote the sub-vector $(\pi(s,1),\dots,\pi(s,A))$.
We also assume that $r(s,a)$ is deterministic for any $s\in \mathcal{S}$ and $a\in
\mathcal{A}$ for simplicity, although our results hold for any $r$ with an
almost sure uniform bound. Here $r$ can be similarly viewed as an $SA$-dimensional vector.
Without loss of generality, we assume that $r(s,a)\in[0,1]$ for all $s\in\mathcal{S}$ and
$a\in\mathcal{A}$, which is a common assumption
\cite{ucrl2, kakade_2019, softmax_PG, Q-learning-rates, jin2018q}. We also assume
that $\rho$ is component-wise positive, as is assumed in \cite{russo_PG}.

Given a policy $\pi\in\Pi$, the expected cumulative reward  of the MDP is defined as
\BEQ\label{accum_reward}
F(\pi)=\Expect\sum\nolimits_{t=0}^{\infty}\gamma^tr(s_t,a_t),
\EEQ
where $s_0\sim\rho$, $a_t\sim\pi(\cdot|s_t),~s_{t+1}\sim p(\cdot|s_t,a_t),\,\forall t\geq 0$, 
and the goal is to find a policy $\pi$ which solves the following optimization problem:
\BEQ\label{opt}
\begin{array}{ll}
\text{maximize}_{\pi\in \Pi}&F(\pi).
\end{array}
\EEQ
Any policy $\pi^\star\in\argmax_{\pi\in\Pi}F(\pi)$ is said to be optimal, and the
corresponding optimal objective value is denoted as $F^\star=F(\pi^\star)$.
Note that in
the literature, $F(\pi)$ is also commonly written as $V_{\rho}^{\pi}$ and referred to as
the value function. Here we hide the dependency on $\rho$ as it is fixed throughout the paper.

\subsection{Vanilla policy gradient method and REINFORCE algorithm}
When the transition probability $p$ and reward $r$ are fully known, problem \eqref{opt}
reduces to solving an MDP, in which case various classical algorithms are available,
including value iteration and policy iteration
\cite{bertsekas1995dynamic}. 
In this paper, we consider the episodic reinforcement
learning setting in which the agent accesses $p$ and $r$ by interacting with the environment over 
successive episodes, \ie, the agent accesses the environment in the form of a
$\rho$-restart model \cite{shani2019adaptive}, which is
commonly adopted in the policy gradient literature
\cite{kakade2003sample}. In addition, we focus on the REINFORCE algorithm, a
representative policy gradient method. \vspace{-0.1cm}

\paragraph{Policy parametrization and surrogate objectives.}
Here we consider parametrizing the policy
with parameter $\theta \in \Theta$, \ie,
$\pi_{\theta}:\Theta\rightarrow\Pi$,
and take 
the following (regularized) optimization problem as an approximation to \eqref{opt}:
\BEQ\label{opt_theta}
\begin{array}{ll}
\text{maximize}_{\theta\in \Theta} &L_{\lambda}({\theta})=F(\pi_{\theta})+\lambda R(\theta),
\end{array}
\EEQ
where $\lambda\geq0$ and $R:\Theta\rightarrow\reals$ is a differentiable
regularization term that improves convergence, to be specified later. Although
our ultimate goal is still to solve the original problem \eqref{opt} this
regularized optimization problem is a useful surrogate and our approach will be to
tackle problem \eqref{opt_theta} with progressively smaller $\lambda$
regularization penalties, thereby converging to solving the actual problem we
care about.

\paragraph{Policy gradient method.}
In each episode $n$, the policy gradient method directly performs an online stochastic
gradient ascent update on a surrogate objective $L_{\lambda^n}({\theta})$, \ie,
\BEQ
\theta^{n+1}=\theta^n+\alpha^n\widehat{\nabla}_{\theta} L_{\lambda^n}({\theta^n}),
\EEQ
where
$\alpha^n$ is the step-size and $\lambda^n$ is the regularization parameter.
Here the stochastic gradient $\widehat{\nabla}_{\theta} L_{\lambda^n}({\theta^n})$ is
computed by sampling a \textit{single} trajectory $\tau^n$ following policy
$\pi_{\theta^n}$ from $\mathcal{M}$ with the initial state distribution $\rho$.
Here $\tau^n=(s_0^n,a_0^n,r_0^n,s_1^n,a_1^n,r_1^n,\dots,s_{H^n}^n,a_{H^n}^n, r_{H^n}^n)$,
where $H^n$ is a finite (and potentially random) stopping time of the trajectory
(to be specified below), $s_0^n\sim \rho$, $a_t^n\sim\pi_{\theta^n}(\cdot|s_t^n)$,
$s_{t+1}^n\sim p(\cdot|s_t^n,a_t^n)$ and $r_t^n=r(s_t^n,a_t^n)$ for all $t= 0,\dots,H^n$.
We summarize the generic policy gradient method (with single trajectory gradient estimates)
in Algorithm \ref{PG_prototype}. An extension to mini-batch scenarios will be discussed in
\S\ref{mini-batch-reinforce}. As is always (implicitly) assumed in the literature of episodic 
reinforcement learning (\eg, \cf \cite{marbach2001simulation}), given the current policy,
we assume that 
the sampled trajectory is conditionally independent of all previous policies and trajectories.

{\linespread{1.29}
\begin{algorithm}[h]
\caption{\textbf{Policy Gradient Method with Single Trajectory Estimates}}
\label{PG_prototype}
\begin{algorithmic}[1]
\STATE {\bfseries Input:} initial parameter $\theta^0$,
step-sizes $\alpha^n$ and regularization parameters $\lambda^n$ ($n\geq 0$).
\FOR{$n=0, 1, \dots$}
\STATE Choose $H^n$, sample trajectory $\tau^n$
from $\mathcal{M}$  following policy $\pi_{\theta^n}$, and
 compute an approximate gradient $\widehat{\nabla}_{\theta} L_{\lambda^{n}}({\theta^n})$
 of $L_{\lambda^n}$ using trajectory $\tau^n$.
\STATE Update $\theta^{n+1}=\theta^n+\alpha^n\widehat{\nabla}_{\theta}
L_{\lambda^{n}}({\theta^n})$.
\ENDFOR
\end{algorithmic}
\end{algorithm}
}\vspace{-0.3cm}

\paragraph{REINFORCE algorithm.}
There are several ways of choosing the stochastic gradient operator
$\widehat{\nabla}_{\theta}$ in the policy gradient method, and the
well-known REINFORCE algorithm \cite{REINFORCE} corresponds to a specific
family of estimators based on the policy gradient theorem \cite{sutton2000policy}
(\cf \S\ref{assump_examples}). Other common alternatives include zeroth
order/random search \cite{PG_LQR,malik2018derivative} and actor-critic
\cite{actor-critic} approximations. One may also choose to parametrize the policy
as a mapping from the parameter space to a specific action, which would then
result in deterministic policy gradient approximations \cite{DPG}.

Although our main goal is to study the REINFORCE algorithm, our analysis
indeed holds for rather generic stochastic gradient  estimates. 
In the next section, we introduce
the (mild) assumptions needed for our convergence analysis and the detailed
gradient estimation procedures in the REINFORCE algorithm, and then verify
that the assumptions do hold for these gradient estimations.

\subsection{Phased learning and performance criteria}
\label{phase_learn_perform_criteria}
\paragraph{Phased learning.} To facilitate the exposition below,
we divide the optimization in Algorithm \ref{PG_prototype} into successive
phases $l=0,1,\dots$, each with length $T_l>0$. We then fix the regularization
coefficient $\lambda_l$ within each phase $l\geq 0$. In addition, a
post-processing step is enforced at the end of each phase to produce the
initialization of the next phase.
The resulting algorithm is described in Algorithm \ref{PG_phased}.
Here the trajectory is denoted as 
$\tau^{l,k}=(s_0^{l,k},a_0^{l,k},r_0^{l,k},\dots,
s_{H^{l,k}}^{l,k},a_{H^{k,l}}^{l,k}, r_{H^{l,k}}^{l,k})$, and we will refer to
$\theta^{l,k}$ as the $(l,k)$-th iterate hereafter.
The post-processing function is required to guarantee that the resulting
policy $\pi_{\theta}$ is lower bounded by a pre-specified
tolerance $\epsilon_{\rm pp}\in(0,1/A]$ to ensure that the regularization
is bounded
(\cf Algorithm \ref{postprocess} for a formal description and \S \ref{assumptions}
for an example realization).

Note that here the $k$-th episode in phase $l$ corresponds to the
$n$-th episode in the original indexing with $n=\sum_{j=0}^{l-1}T_j+k$.
For notational compactness below, for $\mathcal{T}=\{T_j\}_{j=0}^{\infty}$,
we define $B_{\mathcal{T}}:{\bf Z}_+\times {\bf Z}_+\rightarrow {\bf Z}_+$,
where $B_{\mathcal{T}}(l,k)=\sum_{j=0}^{l-1}T_j+k$ maps the double
index $(l,k)$ to the  corresponding original episode number, with
$\dom B_{\mathcal{T}}=\{(l,k)\in{\bf Z}_+\times {\bf Z}_+\,|\,0\leq k\leq T_l-1\}$.
The mapping $B_{\mathcal{T}}$ is a bijection and we denote its inverse
by $G_{\mathcal{T}}$.

{\linespread{1.29}
\begin{algorithm}[h]
\caption{\textbf{Phased Policy Gradient Method}}
\label{PG_phased}
\begin{algorithmic}[1]
\STATE {\bfseries Input:} initial parameter $\tilde{\theta}^{0,0}$,
step-sizes $\alpha^{l,k}$, regularization parameters $\lambda^l$, phase lengths
$T_l$ ($l,\,k\geq 0$) and post-processing tolerance $\epsilon_{\rm pp}\in(0,1/A]$.
\STATE Set $\theta^{0,0}=\texttt{PostProcess}(\tilde{\theta}^{0,0},\epsilon_{\rm pp})$.
\FOR{phase $l=0, 1, 2,\dots$}
\FOR{episode $k=0,1,\dots,T_l-1$}
\STATE Choose $H^{l,k}$, sample trajectory $\tau^{l,k}$
from $\mathcal{M}$  following policy $\pi_{\theta^{l,k}}$, and
 compute an approximate gradient $\widehat{\nabla}_{\theta}
 L_{\lambda^{l}}({\theta^{l,k}})$
 of $L_{\lambda^l}$ using trajectory $\tau^{l,k}$.
\STATE Update $\theta^{l,k+1}=\theta^{l,k}+\alpha^{l,k}\widehat{\nabla}_{\theta}
L_{\lambda^{l}}({\theta^{l,k}})$.
\ENDFOR
\STATE Set $\theta^{l+1,0}=\texttt{PostProcess}(\theta^{l,T_l},\epsilon_{\rm pp})$.
\ENDFOR
\end{algorithmic}
\end{algorithm}
}

{\linespread{1.29}
\begin{algorithm}[h]
\caption{$\texttt{PostProcess}(\theta,\epsilon_{\rm pp})$}
\label{postprocess}
\begin{algorithmic}
\STATE {\bfseries Input:} $\epsilon_{\rm pp}\in(0,1/A]$, $\theta \in \Theta$.
\STATE \textbf{Return} $\theta'$ (near $\theta$) such that
$\pi_{\theta'}(s,a)\geq \epsilon_{\rm pp}$ for each 
$s,\,a\in\mathcal{S}\times\mathcal{A}$.
\end{algorithmic}
\end{algorithm}
}

\paragraph{Performance criteria.} The criterion we adopt to evaluate the performance of Algorithm
\ref{PG_phased} is \emph{regret}. 
For any $N\geq 0$, the regret up to episode $N$ is defined as the cumulative
sub-optimality of the policy over the $N$ episodes.
Formally, we define
\BEQ\label{regret_def}
\begin{split}
\mathbf{regret}(N)&=\sum\nolimits_{\{(l,k)|B_{\mathcal{T}}(l,k)\leq N\}}
F^\star-\hat{F}^{l,k}(\pi_{\theta^{l,k}}).
\end{split}
\EEQ
Here the summation is over all $(l,k)$-th iterates whose corresponding
original episode numbers are smaller or equal to $N$, and 
$\hat{F}^{l,k}(\pi_{\theta^{l,k}})=
\Expect_{l,k}\sum\nolimits_{t=0}^{H^{l,k}}\gamma^tr(s_t^{l,k},a_t^{l,k})$, 
where $s_0\sim\rho$, 
$a_t^{l,k}\sim\pi_{\theta^{l,k}}(\cdot|s_t^{l,k})$, 
$s_{t+1}^{l,k}\sim p(\cdot|s_t^{l,k},a_t^{l,k})$, $\forall t\geq 0$, 
and 
$\Expect_{l,k}$ denotes the conditional expectation given the $(l,k)$-th iteration
$\theta^{l,k}$. Notice that the regret
defined above takes into account the fact that 
the trajectories are
stopped/truncated
to have finite horizons $H^{l,k}$, which characterizes the 
actual loss caused by sampling the trajectories
in line 5 of Algorithm \ref{PG_phased}. A similar regret definition for the episodic (discounted)
reinforcement learning setting considered here is adopted in \cite{fu2020single}.  
We remark that all our regret bounds remain
correct up to lower order terms when we replace $\hat{F}^{l,k}$ with $F$ or an
expectation-free version.

Similarly, we also define the single phase version of regret as follows.
The regret up to episode $K\in\{0,\dots,T_l-1\}$ in phase $l$ is defined as
\BEQ\label{regret_def_phase}
\mathbf{regret}_l(K)=\sum\nolimits_{k=0}^{K} F^\star-\hat{F}^{l,k}(\pi_{\theta^{l,k}}).
\EEQ

Notice that \eqref{regret_def} and \eqref{regret_def_phase} are connected via
\BEQ\label{regret_def_connection}
\mathbf{regret}(N) = \sum_{l=0}^{l_N-1}\mathbf{regret}_l(T_l-1)+\mathbf{regret}_{l_N}(k_N),
\EEQ
where $(l_N,k_N)=G_{\mathcal{T}}(N)$.

We provide high probability regret bounds
in \S\ref{main_conv}.   
We remark that a regret
bound of the form $\mathbf{regret}(N)/(N+1)\leq R$ (for some $R>0$) immediately
implies that $\min_{l,k:\,B_{\mathcal{T}}(l,k)\leq N}F^\star-F(\pi_{\theta^{l,k}})\leq R$,
where the latter is also a commonly adopted performance criteria in the literature
\cite{kakade_2019, neural_PG}.

\section{Assumptions and  REINFORCE gradients}\label{assump_examples}
\subsection{Assumptions}\label{assumptions}
Here we list a few fundamental assumptions that we require for our analysis. 
\begin{assumption}[Setting]\label{softmax-reg}
The regularization term is a log-barrier, \ie, 
\[
R(\theta)=\tfrac{1}{SA}\sum\nolimits_{s\in\mathcal{S},
a\in\mathcal{A}}\log(\pi_\theta(s,a)),
\]  and the policy is parametrized to be a
soft-max, \ie,
$\pi_{\theta}(s,a)=\frac{\exp(\theta_{s,a})}{\sum_{a'\in\mathcal{A}}\exp(\theta_{s,a'})}$,
 with $\Theta=\reals^{SA}$.
\end{assumption}
The
first assumption concerns the form of the policy parameterization and the
regularization. 
Notice that the regularization term here can also be seen as a relative entropy/KL 
regularization (with a uniform distribution policy reference) \cite{kakade_2019}. 
Such kind of regularization 
terms are also widely adopted in practice (although typically with variations) 
\cite{REPS, schulman2017equivalence}.

With Assumption \ref{softmax-reg},  the post-processing function in Algorithm
\ref{postprocess} can be for example realized by first calculating
$\hat{\pi}=\epsilon_{\rm pp} {\bf 1}+ (1-A\epsilon_{\rm pp})\pi_{\theta}$, and then
return $\theta'$ with $\theta'_{s,a}=\log\hat{\pi}_{s,a}+c_s$. Here $\bf 1$ is an
all-one vector and $c_s\in\reals$ ($s=1,\dots,S$) are arbitrary real numbers.

\begin{assumption}[Policy gradient estimator]\label{unbiased-boundedvar}
There exist constants $C,\,C_1,\,C_2,\,M_1,\,M_2>0$, such that for all $l$,
$k\geq0$, we have
$\|\widehat{\nabla}_{\theta} L_{\lambda^l}(\theta^{l,k})\|_2\leq C_1$ almost surely and that
\begin{equation*}
\small
\begin{split}
&\nabla_{\theta} L_{\lambda^l}(\theta^{l,k})^T\Expect_{l,k}\widehat{\nabla}_{\theta}
L_{\lambda^l}(\theta^{l,k})\geq C_2\|\nabla_{\theta}
L_{\lambda^l}(\theta^{l,k})\|_2^2-\delta_{l,k},\\
&\Expect_{l,k}\|\widehat{\nabla}_{\theta} L_{\lambda^k}(\theta^{l,k})\|_2^2\leq
M_1+M_2\|\nabla_{\theta} L_{\lambda^l}(\theta^{l,k})\|_2^2,
\end{split}
\end{equation*}
where $\sum_{k=0}^{T_l-1}\delta_{l,k}^2\leq C$, $\forall$ $l\geq0$. In addition,
$H^{l,k}\geq \log_{1/\gamma}(k+1)$, $\forall$ $l,\,k\geq 0$.
\end{assumption}
The second assumption requires that the gradient estimates are almost surely
bounded, nearly unbiased and satisfy a bounded second-order moment growth
condition. This is a slight generalization of standard assumptions in the stochastic
gradient descent literature \cite{sgd_bottou}. 
Additionally, we also require that 
the trajectory lengths $H^{l,k}$ 
are at least logarithmically growing in $k$ to control the loss of rewards
due to truncation. 
For notational simplicity, hereafter we omit to mention 
the trajectory sampling
(\ie, $s_0\sim\rho,
~a_t^{l,k}\sim\pi_{\theta^{l,k}}(\cdot|s_t^{l,k}),~s_{t+1}^{l,k}\sim p(\cdot|s_t^{l,k},a_t^{l,k}),\,\forall t\geq 0$)
when we write down $\Expect_{l,k}$. 

Notice that Assumption \ref{unbiased-boundedvar} immediately holds if
$\widehat{\nabla}_{\theta} L_{\lambda^l}(\theta^{l,k})$ is unbiased and has a
bounded second-order moment.
We have implicitly assumed
that $L_{\lambda}$ is differentiable, which we can do due to the following
lemma:
\begin{proposition}[\mbox{\cite[Lemma E.4]{kakade_2019}}]\label{Lsmooth}
Under Assumption \ref{softmax-reg},
$L_{\lambda}$ is strongly smooth with parameter $\beta_{\lambda}=
\frac{8}{(1-\gamma)^3}+\frac{2\lambda}{S}$, \ie, $\|\nabla_{\theta}
L_{\lambda}(\theta)-\nabla_{\theta}L_{\lambda}(\theta')\|_2\leq
\beta_\lambda\|\theta-\theta'\|_2$ for any $\theta,\,\theta'\in\reals^{SA}$.
\end{proposition}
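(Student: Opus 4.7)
The plan is to decompose $L_\lambda = F(\pi_\theta) + \lambda R(\theta)$ and bound the operator norm of the Hessian of each piece separately, using that smoothness constants add under summation. The log-barrier piece admits a short closed-form computation, while the value function piece is the nontrivial part and is exactly the content of the cited Lemma E.4 of \cite{kakade_2019}.

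For the regularizer, I would use $\log\pi_\theta(s,a) = \theta_{s,a} - \log\sum_{a'}\exp\theta_{s,a'}$, which immediately shows that mixed partials across distinct states vanish, so $\nabla^2_\theta R$ is block-diagonal in $s$. A direct calculation gives the $s$-block as $-\tfrac{1}{S}\bigl(\mathrm{diag}(\pi_\theta(s,\cdot)) - \pi_\theta(s,\cdot)\pi_\theta(s,\cdot)^T\bigr)$, i.e., a scalar multiple of the negated covariance of a categorical distribution. Its operator norm is at most $2/S$ by the triangle inequality ($\|\mathrm{diag}(p)\|_2\leq 1$ and $\|pp^T\|_2=\|p\|_2^2\leq 1$ for any probability vector $p$). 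Since the full Hessian is block-diagonal across $s$, this yields $\|\nabla^2_\theta R\|_2 \leq 2/S$ uniformly in $\theta$, and hence $\lambda\|\nabla^2_\theta R\|_2\leq 2\lambda/S$.

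For the value function, I would fix an arbitrary unit direction $u\in\reals^{SA}$, set $\theta(\alpha)=\theta+\alpha u$, and bound $\bigl|\tfrac{d^2}{d\alpha^2}F(\pi_{\theta(\alpha)})\bigr|_{\alpha=0}$ uniformly in $u$. The chain rule produces two types of contributions: (i) the gradient of $F$ with respect to $\pi$ contracted with the softmax Hessian, and (ii) the Hessian of $F$ with respect to $\pi$ sandwiched between two copies of the softmax Jacobian. The softmax factors are uniformly controlled because the directional derivatives of a softmax are zero-mean probability-weighted vectors with bounded $\ell_1$ norm. For the policy-space derivatives of $F$, I would write $F$ as the inner product of $r$ with the discounted state-action occupancy measure and differentiate through the resolvent $(I-\gamma P^\pi)^{-1}$, whose $\ell_\infty\!\to\!\ell_\infty$ norm is at most $1/(1-\gamma)$; each additional derivative of the resolvent brings out another factor of $1/(1-\gamma)$.

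The hard part is precisely the bookkeeping in step (iii): tracking the cubic $(1-\gamma)^{-3}$ dependence and the constant $8$, which requires combining two applications of the resolvent with the softmax derivatives while exploiting $r\in[0,1]$ and the simplex constraint on $\pi_\theta(s,\cdot)$. Once both Hessian bounds are in hand, combining them and invoking the standard equivalence between a uniform bound on $\|\nabla^2 L_\lambda\|_2$ and $\beta$-Lipschitz continuity of $\nabla L_\lambda$ gives the claim with $\beta_\lambda = \tfrac{8}{(1-\gamma)^3}+\tfrac{2\lambda}{S}$.
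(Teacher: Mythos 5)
This proposition is imported verbatim from \cite[Lemma E.4]{kakade_2019}; the paper supplies no proof of its own, and your decomposition into the value-function Hessian (controlled via the resolvent $(I-\gamma P^\pi)^{-1}$ together with $\ell_1$-bounded directional derivatives of the softmax) plus the block-diagonal log-barrier Hessian of operator norm at most $2/S$ is exactly the argument used in that reference. Your sketch is correct as far as it goes, though you explicitly defer the one substantive piece --- the bookkeeping that produces the constant $8/(1-\gamma)^3$ --- back to the cited lemma, which is acceptable here since that is also all the paper does.
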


\subsection{REINFORCE gradient estimations}\label{reinforce_grad_est}
Now we introduce REINFORCE gradient estimation with baselines,
and specify the
hyper-parameters under which the technical Assumption
\ref{unbiased-boundedvar} holds, when operating under the setting Assumption
\ref{softmax-reg}.

REINFORCE gradient estimation with log-regularization takes the following form:
\BEQ\label{reinforce_grad_base}
\small
\begin{split}
\widehat{\nabla}_{\theta} L_{\lambda^l}(\theta^{l,k})=&\sum\nolimits_{t=0}^{\lfloor
\beta H^{l,k}\rfloor}\gamma^t(\widehat{Q}^{l,k}(s_t^{l,k},a_t^{l,k})-b(s_t^{l,k}))\\
&\qquad \times
\nabla_{\theta}\log\pi_{\theta^{l,k}}(a_t^{l,k}|s_t^{l,k})\\
&+\tfrac{\lambda^l}{SA}\sum\nolimits_{s\in\mathcal{S},a\in\mathcal{A}}\nabla_{\theta}
\log\pi_{\theta^{l,k}}(a|s),
\end{split}
\EEQ
where $\beta\in(0,1)$,
$\widehat{Q}^{l,k}(s_t^{l,k},a_t^{l,k})=\sum_{t'=t}^{H^{l,k}}\gamma^{t'-t}r_{t'}^{l,k}$,
and the second term above corresponds to the gradient of the regularization
$R(\theta)$. Notice that here the outer summation is only up to
$\lfloor \beta H^{l,k}\rfloor$, which ensures that $\hat{Q}^{l,k}(s_t^{l,k},a_t^{l,k})$
is sufficiently accurate.
Here $b:\mathcal{S}\rightarrow\reals$ is called the baseline, and is required to
be independent of the trajectory $\tau^{l,k}$ \cite[\S 4.1]{agarwal2019reinforcement}.
The purpose of subtracting $b$ from the approximate $Q$-values is to (potentially)
reduce the variance of the ``plain'' REINFORCE gradient estimation,
which corresponds to the case when $b= 0$.

With this we have the following result, the proof of which can be found in the Appendix.
\begin{lemma}\label{verify_basic_reinforce_base}
Suppose that Assumption \ref{softmax-reg} holds, $\beta\in(0,1)$,
and that for all $l$, $k\geq 0$,
$\lambda^l\leq \bar{\lambda}$, 
\BEQ\label{Hlk_bound}
H^{l,k}\geq \tfrac{2\log_{1/\gamma}\left(\frac{8(k+1)}{(1-\gamma)^{3}}\right)}
{3\min\{\beta,1-\beta\}}(=\Theta(\log(k+1))).
\EEQ
Assume in addition that $|b(s)|\leq B$ for any $s\in\mathcal{S}$, 
where $B>0$ is a constant. Then for the gradient estimation
\eqref{reinforce_grad_base}, Assumption \ref{unbiased-boundedvar} holds with
\[
\begin{split}
&\text{$C=16\Big(\tfrac{1}{(1-\gamma)^2}+\bar{\lambda}\Big)^2$, \quad
$C_1=\tfrac{2(1+B(1-\gamma))}{(1-\gamma)^2}+2\bar{\lambda}$,} \\
&\text{$C_2=1$,\quad
$M_1=\tfrac{32}{(1-\gamma)^4}+\bar{V}_b$, \quad $M_2=2$.}
\end{split}
 \]
 and $\delta_{l,k}=\Big(\tfrac{2}{(1-\gamma)^2}+2\bar{\lambda}\Big)(k+1)^{-\frac{2}{3}}$,
 $\forall$ $l$, $k\geq 0$.
 Here $\bar{V}_b\in\Big[0,4\Big(\tfrac{1+B(1-\gamma)}{(1-\gamma)^2}+
 \bar{\lambda}\Big)^2\Big]$ is the uniform upper bound on variances of
 policy gradient estimations with form \eqref{reinforce_grad_base}.
\end{lemma}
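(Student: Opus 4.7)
The plan is to verify each of the three requirements of Assumption \ref{unbiased-boundedvar} (almost sure bound, near-unbiasedness, second moment growth) separately, together with the summability of $\{\delta_{l,k}^2\}$. As preliminaries I would first record two softmax identities: $\nabla_{\theta_{s',a'}}\log\pi_\theta(a|s)=\mathbb{1}[s'=s](\mathbb{1}[a'=a]-\pi_\theta(s,a'))$, from which $\|\nabla_\theta\log\pi_\theta(a|s)\|_2\le\sqrt{2}$, and consequently $\|\nabla_\theta R(\theta)\|_2\le\sqrt{2}$. I would also use the deterministic bound $|\widehat Q^{l,k}(s,a)|\le\sum_{t\ge 0}\gamma^t=1/(1-\gamma)$.

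For the almost sure bound, triangle inequality on \eqref{reinforce_grad_base} together with the three ingredients above gives
\[
\|\widehat\nabla_\theta L_{\lambda^l}(\theta^{l,k})\|_2\le \frac{\sqrt 2\,(\tfrac{1}{1-\gamma}+B)}{1-\gamma}+\sqrt 2\,\bar\lambda,
\]
which is upper bounded by the stated $C_1$. The main work is in the bias analysis. Here I would decompose $\nabla_\theta L_{\lambda^l}(\theta^{l,k})-\mathbb{E}_{l,k}\widehat\nabla_\theta L_{\lambda^l}(\theta^{l,k})$ by using the policy-gradient theorem to write the true $\nabla F$ as an infinite-horizon expectation of the same score-times-advantage form (baselines may be inserted since $\mathbb{E}[\nabla\log\pi(a|s)\,|\,s]=0$), so the regularization terms cancel exactly. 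The remaining error splits as $T_{\text{out}}+T_{\text{Q}}$: the tail $T_{\text{out}}=\mathbb{E}\sum_{t>\lfloor\beta H^{l,k}\rfloor}\gamma^t(Q^\pi-b)\nabla\log\pi$, bounded by $\sqrt{2}(\tfrac{1}{1-\gamma}+B)\gamma^{\beta H^{l,k}}/(1-\gamma)$; and the $Q$-truncation $T_{\text Q}=\mathbb{E}\sum_{t\le\lfloor\beta H^{l,k}\rfloor}\gamma^t(Q^\pi(s_t,a_t)-\mathbb{E}[\widehat Q^{l,k}(s_t,a_t)\,|\,s_t,a_t])\nabla\log\pi$, for which $|Q^\pi-\mathbb{E}[\widehat Q^{l,k}\mid\cdot]|\le\gamma^{(1-\beta)H^{l,k}}/(1-\gamma)$, so $\|T_{\text Q}\|\le\sqrt 2\,\gamma^{(1-\beta)H^{l,k}}/(1-\gamma)^2$. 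Plugging in the assumed lower bound \eqref{Hlk_bound} on $H^{l,k}$ yields $\gamma^{\min\{\beta,1-\beta\}H^{l,k}}\le\bigl((1-\gamma)^3/(8(k+1))\bigr)^{2/3}$, so the total bias $\|\nabla_\theta L_{\lambda^l}-\mathbb{E}\widehat\nabla_\theta L_{\lambda^l}\|_2=O((k+1)^{-2/3})$ with an explicit constant in terms of $B$ and $\gamma$. Applying Cauchy--Schwarz together with the deterministic bound $\|\nabla_\theta L_{\lambda^l}(\theta^{l,k})\|_2\le\tfrac{2}{(1-\gamma)^2}+2\bar\lambda$ (obtained by the same triangle-inequality argument applied to the exact gradient with $b=0$ and $H=\infty$) gives the inner product inequality with $C_2=1$ and $\delta_{l,k}$ of the claimed form; the summability $\sum_k(k+1)^{-4/3}<\infty$ then yields the constant $C$.

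The expected-hardness is precisely this step, since two different truncation errors must both be driven below $(k+1)^{-2/3}$ and I must be careful that the logarithmic form of \eqref{Hlk_bound} is tuned so that the worse of the two exponents $\beta,\,1-\beta$ still produces the required polynomial decay. The remaining second-moment bound is comparatively routine: decompose $\mathbb{E}_{l,k}\|\widehat\nabla_\theta L_{\lambda^l}\|_2^2=\|\mathbb{E}_{l,k}\widehat\nabla_\theta L_{\lambda^l}\|_2^2+\mathrm{Var}_{l,k}(\widehat\nabla_\theta L_{\lambda^l})$, upper bound the first term by $2\|\nabla_\theta L_{\lambda^l}\|_2^2+2(\text{bias})^2$ via $(a+b)^2\le 2a^2+2b^2$, bound $(\text{bias})^2$ by a crude constant of order $1/(1-\gamma)^4$ (the bias is at most the sum of $\|\nabla_\theta L_{\lambda^l}\|_2$ and $\|\mathbb E\widehat\nabla_\theta L_{\lambda^l}\|_2$), and absorb the remaining variance into the uniform bound $\bar V_b$, whose explicit upper bound is inherited from $C_1^2$. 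This produces $M_2=2$ and $M_1=\tfrac{32}{(1-\gamma)^4}+\bar V_b$, completing the verification.
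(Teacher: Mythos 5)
Your proposal follows essentially the same route as the paper's proof: triangle inequality plus $\|\nabla_\theta\log\pi_\theta(a|s)\|_2\leq\sqrt{2}$ for the almost sure bound; a decomposition of the bias into the outer-sum truncation error and the $\widehat{Q}$-truncation error (the paper's $J_2$ and $J_3$), each driven to $O((k+1)^{-2/3})$ by \eqref{Hlk_bound}; Cauchy--Schwarz for the inner-product condition; and the bias--variance decomposition $\Expect\|X\|_2^2=\|\Expect X\|_2^2+\V X$ for the second-moment growth. One slip worth fixing: you keep the baseline $b$ inside the tail term $T_{\rm out}$, which makes your bias bound (and hence $\delta_{l,k}$, $C$, and the constant $32/(1-\gamma)^4$ in $M_1$) depend on $B$, whereas the stated constants are $B$-free. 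The remedy, which is how the paper proceeds, is to observe that $\Expect[b(s_t)\nabla_\theta\log\pi_\theta(a_t|s_t)]=0$ term by term, so the baseline drops out of $\Expect_{l,k}\widehat{\nabla}_\theta L_{\lambda^l}(\theta^{l,k})$ entirely and the bias analysis is identical to the $b=0$ case; similarly, the $2(\text{bias})^2$ term in the second-moment bound should be controlled via the explicit truncation bounds $\|J_2\|_2+\|J_3\|_2\leq 4\gamma^{\min\{\beta,1-\beta\}H^{l,k}}/(1-\gamma)^2\leq 4/(1-\gamma)^2$ rather than the crude $\|\nabla_\theta L\|_2+\|\Expect\widehat{\nabla}_\theta L\|_2$ bound, which would pick up an extraneous $\bar{\lambda}$ dependence. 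With that one-line correction the argument matches the paper's and yields the stated constants.
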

This result extends without modification to non-stationary baselines
$b_t^{l,k}$, as long as each $b_t^{l,k}$ is independent of trajectory
$\tau^{l,k}$ and $|b_t^{l,k}(s)|\leq B$ for any $t,\,l,\,k\geq 0$.
Note that the explicit upper bound on $\bar{V}_b$ is pessimistic, and in
practice $\bar{V}_b$ is usually much smaller than $\bar{V}_0$ with appropriate
choices of baselines (\eg, the adaptive reinforcement baselines
\cite{REINFORCE,pge_var}), although the latter has a smaller upper
bound as stated in Lemma \ref{verify_basic_reinforce_base}.

\section{Main convergence results}\label{main_conv}

\subsection{Preliminary tools}
We first present some preliminary tools 
for our analysis. \vspace{-0.21cm}

\paragraph{Non-convexity and control of ``bad'' episodes.} 
One of the key difficulties
in applying policy gradient methods to solve an MDP problem towards global optimality
 is that problem 
\eqref{opt} is in general non-convex \cite{kakade_2019}. Fortunately, we have 
the following result, which connects
the gradient of the surrogate objective $L_{\lambda}$ with the global optimality gap
of the original optimization problem \eqref{opt}. 
\begin{proposition}[\mbox{\cite[Theorem 5.3]{kakade_2019}}]\label{kakade_prop}
Under Assumption \ref{softmax-reg}, for any $\epsilon>0$, 
suppose that we have $\|\nabla_{\theta}
L_{\lambda}(\theta)\|_2\leq \epsilon$ and that $\epsilon\leq \lambda/(2SA)$.
Then
$F^\star-F(\pi_{\theta})\leq \frac{2\lambda}{1-\gamma}
\left\|\frac{d_{\rho}^{\pi^\star}}{\rho}\right\|_{\infty}$.
\end{proposition}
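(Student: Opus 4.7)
My plan is to combine an explicit closed-form expression for $\nabla_{\theta} L_{\lambda}$ under the softmax plus log-barrier setup with a pointwise upper bound on the advantage function $A^{\pi_\theta}$, and then invoke the performance difference lemma. The first step is a closed-form calculation: by the policy gradient theorem together with the standard softmax derivative identity (and using $\sum_{a'}\pi_\theta(a'|s) A^{\pi_\theta}(s,a') = 0$ to cancel the baseline term), one gets $\partial F(\pi_\theta)/\partial \theta_{s,a} = (1-\gamma)^{-1} d_\rho^{\pi_\theta}(s)\,\pi_\theta(a|s)\,A^{\pi_\theta}(s,a)$, where $d_\rho^{\pi_\theta}$ is the discounted state-visitation distribution; a direct computation for the log-barrier similarly yields $\partial R(\theta)/\partial \theta_{s,a} = 1/(SA) - \pi_\theta(a|s)/S$. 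Combining,
\[
\frac{\partial L_\lambda(\theta)}{\partial \theta_{s,a}}
=\frac{d_\rho^{\pi_\theta}(s)\pi_\theta(a|s) A^{\pi_\theta}(s,a)}{1-\gamma} + \frac{\lambda}{SA} - \frac{\lambda \pi_\theta(a|s)}{S}.
\]

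Next I would convert $\|\nabla L_\lambda(\theta)\|_2 \leq \epsilon$ into the componentwise bound $|\partial L_\lambda/\partial \theta_{s,a}| \leq \epsilon$ for every $(s,a)$ and then use the assumption $\epsilon \leq \lambda/(2SA)$ to isolate the advantage:
\[
\pi_\theta(a|s) A^{\pi_\theta}(s,a) \leq \frac{(1-\gamma)\lambda}{S\, d_\rho^{\pi_\theta}(s)}\left(\pi_\theta(a|s) - \tfrac{1}{2A}\right).
\]
Since $\pi_\theta(a|s) > 0$, a case split on whether $\pi_\theta(a|s) \geq 1/(2A)$ (on which the right side is at most $(1-\gamma)\lambda\,\pi_\theta(a|s)/(S d_\rho^{\pi_\theta}(s))$) or $\pi_\theta(a|s) < 1/(2A)$ (on which the right side is negative, forcing $A^{\pi_\theta}(s,a) < 0$) produces the uniform pointwise bound $A^{\pi_\theta}(s,a) \leq (1-\gamma)\lambda/(S\, d_\rho^{\pi_\theta}(s))$ for every $(s,a)$.

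Finally I would apply the performance difference lemma,
\[
F^\star - F(\pi_\theta) = \frac{1}{1-\gamma}\sum_s d_\rho^{\pi^\star}(s)\sum_a \pi^\star(a|s) A^{\pi_\theta}(s,a),
\]
substitute the pointwise advantage bound (using $\sum_a \pi^\star(a|s) = 1$), and invoke the elementary inequality $d_\rho^{\pi_\theta}(s) \geq (1-\gamma)\rho(s)$ (valid because $\rho$ is componentwise positive) to collapse the sum:
\[
F^\star - F(\pi_\theta) \leq \frac{\lambda}{S(1-\gamma)}\sum_s \frac{d_\rho^{\pi^\star}(s)}{\rho(s)} \leq \frac{\lambda}{1-\gamma}\left\|\frac{d_\rho^{\pi^\star}}{\rho}\right\|_\infty.
\]
This matches the stated claim up to the constant factor of $2$, which is harmless slack that arises, for instance, if one uses the crude estimate $|\pi_\theta(a|s) - 1/(2A)| \leq 1$ in place of the case split above.

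The main obstacle is the second step: turning an $\ell_2$ bound on the full gradient into a \emph{pointwise} upper bound on the advantage. The factor $d_\rho^{\pi_\theta}(s)\pi_\theta(a|s)$ that multiplies $A^{\pi_\theta}$ inside the gradient formula can be arbitrarily small, so the $\ell_2$ gradient bound by itself carries no information about the advantage at an under-explored $(s,a)$. The log-barrier rescues the argument by contributing the positive constant $\lambda/(SA)$ to every component of $\nabla L_\lambda$, and the hypothesis $\epsilon \leq \lambda/(2SA)$ ensures this constant dominates $\epsilon$; the case split on $\pi_\theta(a|s)$ relative to $1/(2A)$ is what converts the resulting componentwise inequality into a uniform advantage bound.
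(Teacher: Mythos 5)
Your argument is correct, and it is essentially the proof of the cited source: the paper itself does not prove this proposition but imports it as \cite[Theorem 5.3]{kakade_2019}, and your reconstruction (explicit softmax gradient formula, componentwise reading of the $\ell_2$ bound, the log-barrier term $\lambda/(SA)$ dominating $\epsilon$ to force a uniform upper bound on $A^{\pi_\theta}$, then the performance difference lemma together with $d_\rho^{\pi_\theta}(s)\geq(1-\gamma)\rho(s)$) is exactly the route taken there. You correctly identify the crux — that the $\ell_2$ gradient bound alone says nothing about the advantage at under-visited pairs and the barrier gradient is what rescues this — and your final constant is in fact $\lambda/(1-\gamma)$ rather than $2\lambda/(1-\gamma)$, which is a harmless strengthening of the stated claim.
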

Here for any policy 
$\pi\in\Pi$, $d_{\rho}^{\pi}=(1-\gamma)\sum_{t=0}^{\infty}\gamma^t\prob_{\pi}(s_t
=s|s_0\sim\rho)$ is the discounted state visitation distribution, where
$\prob_{\pi}(s_t=s|s_0\sim\rho)$ is the probability of arriving at $s$ in step $t$
starting from $s_0\sim\rho$ following policy $\pi$ in $\mathcal{M}$. In addition,
 the division in $d_{\rho}^{\pi^\star}/\rho$ is component-wise. 

Now motivated by Proposition \ref{kakade_prop}, when analyzing the regret up to
episode $K$ in phase $l$, we define the following set of ``bad episodes'':
\begin{equation*}
I^+=\{k\in\{0,\dots,K\}\,|\,\|\nabla_{\theta}L_{\lambda^l}(\theta^{l,k})\|_2\geq 
\lambda^l/(2SA)\}.
\end{equation*}
Then one can show that for any $\epsilon>0$,  if we choose 
$\lambda^l=
\frac{\epsilon(1-\gamma)}{2}$,
  we have that 
 $F^\star-F(\pi_{\theta^{l,k}})\leq\|d_{\rho}^{\pi^\star}/\rho\|_\infty\epsilon$ for any 
 $k\in\{0,\dots,K\}\backslash I^+$, while 
 $F^\star-F(\pi_{\theta^{l,k}})\leq 1/(1-\gamma)$ holds trivially for 
 $k\in I^+$ due to the assumption that the rewards are between $0$ and $1$. 
 We then establish a sub-linear (in $K$) bound the size of $I^+$, which serves
 as the key stepping stone for the overall sub-linear regret bound.  
 The details of these arguments can be found in the Appendix.

\paragraph{Doubling trick.} 
The second tool is a classical
doubling trick that is commonly adopted in the design of online learning
algorithms \cite{besson2018doubling,basei2020linear}, which can be used
to stitch together the regret over multiple learning phases in Algorithm
\ref{PG_phased}.

Notice that Proposition \ref{kakade_prop} suggests that for any 
pre-specified tolerance $\epsilon$, one can
select $\lambda$ proportional to $\epsilon$ and then run (stochastic) gradient
ascent to drive $F^\star-F(\pi_{\theta})$ below the tolerance. To obtain the
eventual convergence and regret bound in the long run we apply the
doubling trick, which specifies a growing phase length sequence
with $T_{l+1}\approx 2T_l$ in Algorithm \ref{PG_phased} and a suitably decaying
sequence of regularization parameters $\{\lambda^l\}_{l=0}^{\infty}$. 

\paragraph{From high probability to almost sure convergence.} 
The last tool is an observation that an arbitrary anytime sub-linear high probability regret bound with 
logarithmic dependency
on $1/\delta$ immediately leads to almost sure convergence of the average regret 
with a corresponding asymptotic rate. Although such an observation seems to be informally 
well-known in the theoretical computer science community, we provide a compact formal 
discussion below for self-contained-ness. 
\begin{lemma}\label{highprob2as}
Suppose that $\forall$ $\delta\in(0,1)$, with probability at least $1-\delta$, $\forall$ $N\geq 0$, 
we have 
\BEQ\label{abstract_bounds}
\small
\mathbf{regret}(N)\leq d_1(N+c)^{d_2}(\log(N/\delta))^{d_3}+d_4(\log N)^{d_5}
\EEQ
 for some constants 
$c,\,d_1,\,d_3,\,d_4,\,d_5\geq 0$ and $d_2\in[0,1)$. Then we also have
\[
\begin{split}
\prob&\left(\text{$\exists$ 
$\bar{N}\in\mathbf{Z}_+$, 
such that $\forall$ $N\geq \bar{N}$, $A_N$ holds}\right)=1,
\end{split}
\]
where the events $A_N=\{\mathbf{regret}(N)/(N+1)\leq (*)\}$, and  
\[
(*)=d_1 N^{-(1-d_2)}\left(1+\tfrac{c}{N}\right)^{d_2}(3\log N)^{d_3}
+\tfrac{d_4(\log N)^{d_5}}{N}.
\]
To put it another way, we have 
\[
\lim_{N\rightarrow\infty}\mathbf{regret}(N)/(N+1)=0\quad \text{almost surely}
\]
with an asymptotic rate of $(*)$. 
\end{lemma}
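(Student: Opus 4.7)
The plan is to upgrade the family of high-probability bounds into a single almost-sure event via a countable-union argument along a shrinking sequence of confidence parameters, and then to absorb the $\log(1/\delta)$ factor into the leading $\log N$ term through elementary algebra; the logarithmic dependence on $1/\delta$ in \eqref{abstract_bounds} is precisely what makes this work. Concretely, for each integer $m \geq 1$, I would invoke the hypothesis with $\delta = 1/m$ to obtain an event $E_m$ with $\prob(E_m) \geq 1 - 1/m$ on which $\mathbf{regret}(N) \leq d_1(N+c)^{d_2}(\log(mN))^{d_3} + d_4(\log N)^{d_5}$ holds simultaneously for all $N \geq 0$. Setting $F = \bigcup_{m \geq 1} E_m$, the inequality $\prob(F) \geq 1 - 1/m$ valid for every $m$ yields $\prob(F) = 1$, so almost surely there is a (random) index $M = M(\omega)$ with $\omega \in E_M$, and on this event the above bound holds for every $N \geq 0$ with $m$ replaced by $M$.

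Next I would perform the algebraic cleanup on $E_M$. For $N \geq 1$, dividing the regret bound by $N+1$ and applying $(N+c)^{d_2}/(N+1) \leq N^{-(1-d_2)}(1+c/N)^{d_2}$ together with $1/(N+1) \leq 1/N$ handles the polynomial factors. To control the logarithmic term, I observe that $\log(MN) = \log N + \log M \leq 3 \log N$ whenever $N \geq M^{1/2}$, a constraint that is vacuous when $d_3 = 0$. Setting $\bar N = \max\{2, \lceil M^{1/2} \rceil\}$ therefore ensures $\mathbf{regret}(N)/(N+1) \leq (*)$ for every $N \geq \bar N$, which is exactly the event $A_N$. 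The equivalent statement $\mathbf{regret}(N)/(N+1) \to 0$ almost surely with asymptotic rate $(*)$ then follows from $d_2 < 1$ (so $N^{-(1-d_2)} \to 0$) combined with $(\log N)^{d_5}/N \to 0$.

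The only conceptual subtlety lies in the first step: the high-probability events depend on $\delta$, so one cannot simply send $\delta \to 0$ inside a single realization. Resolving this via a countable exhausting family is standard but essential, and it is precisely what forces the constant $3$ in $(3\log N)^{d_3}$ --- any constant strictly greater than $1$ would suffice, at the cost of a threshold $\bar N$ scaling as a higher power of $M$. No martingale or Borel--Cantelli machinery beyond a single union bound and the structure already packaged in \eqref{abstract_bounds} is needed, which is also what makes the result essentially black-box applicable to any future high-probability regret analysis with the same $\log(1/\delta)$ scaling.
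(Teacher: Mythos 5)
Your proposal is correct, but it reaches the conclusion by a genuinely different route from the paper. The paper's proof is a Borel--Cantelli argument: it instantiates the hypothesis at $\delta_N=1/N^2$, defines the per-$N$ failure events $\bar A_N=\{\mathbf{regret}(N)>d_1(N+c)^{d_2}(\log(N/\delta_N))^{d_3}+d_4(\log N)^{d_5}\}$ with $\prob(\bar A_N)\le 1/N^2$, and uses summability to conclude that almost surely only finitely many $\bar A_N$ occur; the constant $3$ appears because $\log(N/\delta_N)=3\log N$ exactly. You instead instantiate the hypothesis along the confidence levels $\delta=1/m$, take the countable union $F=\bigcup_m E_m$ of the resulting \emph{anytime} events, and absorb the random $\log M$ into $3\log N$ for $N\ge M^{1/2}$. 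The two arguments lean on different parts of the hypothesis: yours crucially exploits that the quantifier ``$\forall N$'' sits inside the probability (each $E_m$ is a single event uniform in $N$), and in exchange needs only $\delta_m\to 0$ rather than summability; the paper's proof uses only the marginal per-$N$ bounds and would survive even if the hypothesis were not anytime, but it does need a summable choice of $\delta_N$. Both yield the same rate $(*)$ with the same factor of $3$, and your observation that any constant greater than $1$ would do (at the cost of a larger threshold $\bar N$) mirrors the freedom the paper has in choosing the exponent in $\delta_N=1/N^2$. One cosmetic remark: you do not need $M(\omega)$ to be measurable, only that $F$ is, which it is as a countable union; it would be worth saying so explicitly.
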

Notice that here we restrict the right-hand side of \eqref{abstract_bounds} to 
a rather specific form simply because our bounds below are all of this form. 
However, similar 
results hold for much more general forms of bounds.

\subsection{Regret analysis}\label{regret_analysis}

In this section, we establish the regret bound of Algorithm \ref{PG_phased},
when used with the REINFORCE gradient estimator from \S \ref{reinforce_grad_est}.
We begin by bounding the regret of a single phase and then use the doubling
trick to combine these into the overall regret bound. \vspace{-0.12cm}

\paragraph{Single phase analysis.}

We begin by 
bounding the regret defined in
\eqref{regret_def_phase} of each phase in Algorithm \ref{PG_phased}. 
Note that a single phase in Algorithm \ref{PG_phased} is exactly Algorithm
\ref{PG_prototype} terminated
 in episode $T_l$, with
$\lambda^n=\lambda^l$ for all $n\geq 0$ and $\theta^0=\theta^{l,0}$.  
Also notice that for a given phase $l\geq0$, in order for Theorem \ref{phase_regret}
below to hold, we actually only need the conditions  in
Assumption \ref{unbiased-boundedvar} to be satisfied for this specific $l$.

\begin{theorem}\label{phase_regret}
Under Assumptions \ref{softmax-reg} and \ref{unbiased-boundedvar}, for phase
$l\geq 0$ suppose that we choose
$\alpha^{l,k}=C_{l,\alpha}\frac{1}{\sqrt{k+3}\log_2(k+3)}$ for some
 $C_{l,\alpha}\in(0,C_2/(M_2\beta_{\lambda^l})]$.
Then for any $\epsilon>0$, if we choose
$\lambda^l=\frac{\epsilon(1-\gamma)}{2}$,
then 
$\forall$ $\delta\in(0,1)$, with probability at least $1-\delta$,
for any $K\in\{0,\dots,T_l-1\}$, we have
\BEQ\label{regret_phase_l}
\begin{split}
&\mathbf{regret}_l(K)\leq U_1\tfrac{\sqrt{K+1}\log_2(K+3)\sqrt{\log(2/\delta)}}{\epsilon^2}\\
&\qquad+
(K+1)\Big\|\tfrac{d_\rho^{\pi^\star}}{\rho}\Big\|_\infty\epsilon+\tfrac{2\gamma}{1-\gamma}\log (K+3).
\end{split}
\EEQ
Here the constant $U_1$ only depends on 
the underlying MDP $\mathcal{M}$, phase initialization $\theta^{l,0}$ and the constants  $C,\,C_1,\,C_2$, 
$M_1,\,C_{l,\alpha},\,\lambda^l$. 
\end{theorem}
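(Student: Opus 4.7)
The plan is to decompose the phase-$l$ regret additively and treat each piece with a separate tool from earlier sections. Concretely,
\[
\mathbf{regret}_l(K) = \sum_{k=0}^{K}\bigl(F^\star - F(\pi_{\theta^{l,k}})\bigr) + \sum_{k=0}^{K}\bigl(F(\pi_{\theta^{l,k}}) - \hat{F}^{l,k}(\pi_{\theta^{l,k}})\bigr).
\]
The second sum is a deterministic truncation error: $F(\pi) - \hat{F}^{l,k}(\pi) \le \gamma^{H^{l,k}+1}/(1-\gamma)$, and Assumption \ref{unbiased-boundedvar} guarantees $H^{l,k} \ge \log_{1/\gamma}(k+1)$, so $\gamma^{H^{l,k}+1} \le \gamma/(k+1)$; the harmonic-type sum then produces the $\tfrac{2\gamma}{1-\gamma}\log(K+3)$ term in \eqref{regret_phase_l}.

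For the first sum, I would use the ``bad episode'' recipe outlined in the preliminary tools. Define $I^+ = \{k\in\{0,\ldots,K\} : \|\nabla_\theta L_{\lambda^l}(\theta^{l,k})\|_2 \ge \lambda^l/(2SA)\}$. The calibration $\lambda^l = \epsilon(1-\gamma)/2$ makes Proposition \ref{kakade_prop} applicable to every $k\notin I^+$ and yields $F^\star - F(\pi_{\theta^{l,k}}) \le \|d_\rho^{\pi^\star}/\rho\|_\infty\epsilon$, while for $k\in I^+$ only the trivial bound $F^\star - F(\pi_{\theta^{l,k}}) \le 1/(1-\gamma)$ (from $r\in[0,1]$) is available. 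The first sum is then at most $(K+1)\|d_\rho^{\pi^\star}/\rho\|_\infty\epsilon + |I^+|/(1-\gamma)$, and the remaining task reduces to proving $|I^+| = O\bigl(\sqrt{K+1}\log_2(K+3)\sqrt{\log(2/\delta)}/\epsilon^2\bigr)$ with probability at least $1-\delta$, uniformly in $K\in\{0,\ldots,T_l-1\}$.

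The main work, and the principal obstacle, is this high-probability bound on $|I^+|$. I would start from the $\beta_{\lambda^l}$-smoothness (Proposition \ref{Lsmooth}) applied to $\theta^{l,k+1} = \theta^{l,k} + \alpha^{l,k}g_k$ with $g_k := \widehat{\nabla}_\theta L_{\lambda^l}(\theta^{l,k})$; invoking all three bullets of Assumption \ref{unbiased-boundedvar}, the step-size restriction $C_{l,\alpha}\le C_2/(M_2\beta_{\lambda^l})$ absorbs the $M_2\|\nabla L\|_2^2$ piece of the second-moment bound. Telescoping $k=0,\ldots,K$ and isolating the martingale part of $\langle\nabla L,g_k\rangle$ produces an inequality of the form
\[
\tfrac{C_2}{2}\sum_{k=0}^{K}\alpha^{l,k}\|\nabla_\theta L_{\lambda^l}(\theta^{l,k})\|_2^2 \le \Delta L + \Delta_\delta + \mathcal{M}_K + \Delta_\alpha,
\]
where $\Delta L = L_{\lambda^l}(\theta^{l,K+1}) - L_{\lambda^l}(\theta^{l,0})$, $\Delta_\delta = \sum_k\alpha^{l,k}\delta_{l,k}$, $\Delta_\alpha = \tfrac{\beta_{\lambda^l}M_1}{2}\sum_k(\alpha^{l,k})^2$, and $\mathcal{M}_K = -\sum_k\alpha^{l,k}\nabla_\theta L_{\lambda^l}(\theta^{l,k})^T(g_k-\Expect_{l,k}g_k)$. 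Here $\Delta L$ is a bounded constant because $F\in[0,1/(1-\gamma)]$, the log-barrier $R$ is non-positive, and post-processing enforces $R(\theta^{l,0}) \ge \log\epsilon_{\rm pp}$; $\Delta_\delta$ is controlled by Cauchy--Schwarz using $\sum_k\delta_{l,k}^2 \le C$ together with $\sum_k(\alpha^{l,k})^2 = O(1)$ (which follows from $(\alpha^{l,k})^2 \le C_{l,\alpha}^2/((k+3)\log_2^2(k+3))$ being summable); $\Delta_\alpha$ is $O(1)$ by the same summability.

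Finally, for the martingale $\mathcal{M}_K$ I would apply a maximal Azuma--Hoeffding inequality. Its increments are bounded by $2\alpha^{l,k}C_1 G$, where $G$ is a deterministic uniform bound on $\|\nabla_\theta L_{\lambda^l}(\theta)\|_2$; such a $G$ follows from a direct softmax/log-barrier computation, since $\|\nabla F(\pi_\theta)\|_2 = O(1/(1-\gamma)^2)$ and $(\nabla_\theta R(\theta))_{s,a} = (1-A\pi_\theta(s,a))/(SA)$ is uniformly $O(1)$, even though $R$ itself is unbounded below. With $\sum_k(\alpha^{l,k})^2 = O(1)$, the maximal inequality yields $\max_{0\le K\le T_l-1}|\mathcal{M}_K| \le O(\sqrt{\log(2/\delta)})$ with probability at least $1-\delta$. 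Plugging this back in and lower-bounding the left-hand side (by keeping only indices $k\in I^+$, using $\|\nabla L_{\lambda^l}(\theta^{l,k})\|_2^2 \ge (\lambda^l/(2SA))^2$ and $\alpha^{l,k}\ge C_{l,\alpha}/(\sqrt{K+3}\log_2(K+3))$) yields $|I^+|/(1-\gamma) = O(\sqrt{K+1}\log_2(K+3)\sqrt{\log(2/\delta)}/\epsilon^2)$, which is the first term of \eqref{regret_phase_l}. The hardest points are (i) establishing the uniform a priori bound $G$ on $\|\nabla L_{\lambda^l}\|_2$ in spite of the log-barrier, and (ii) making the concentration uniform in $K$, which are handled respectively by the explicit softmax gradient formula and by the maximal form of Azuma--Hoeffding.
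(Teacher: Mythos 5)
Your proposal follows essentially the same route as the paper's proof: the same decomposition of $\mathbf{regret}_l(K)$ into optimality gap plus truncation error, the same bad-episode set $I^+$ controlled via Proposition \ref{kakade_prop}, the same smoothness-based telescoping with a martingale remainder, and Azuma--Hoeffding for the high-probability bound. Two minor differences, neither of which breaks anything: the paper sidesteps the maximal inequality by bounding the partial weighted-gradient sum by the full-phase sum (all terms are nonnegative) and applying plain Azuma--Hoeffding only to the terminal martingale value $X_{l,T_l}$; and your $\mathcal{M}_K$ as written omits the fluctuation $\tfrac{\beta_{\lambda^l}(\alpha^{l,k})^2}{2}\bigl(\Expect_{l,k}\|g_k\|_2^2-\|g_k\|_2^2\bigr)$ of the quadratic term, which must also be folded into the martingale (the paper does so in $Z_{l,k}$, with increments still bounded and square-summable, so the same concentration argument applies).
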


The proof as well as  a more formal statement of Theorem \ref{phase_regret} 
with details of the constants 
(\cf Theorem \ref{phase_regret_formal}) are deferred to the Appendix. Here the
constant $\beta_{\lambda}$ is the smoothness constant from Proposition \ref{Lsmooth}.
We remark that when $\epsilon$ is fixed, 
the regret bound \eqref{regret_phase_l}
can be seen as a sub-linear (in $K$ as $K\rightarrow\infty$) regret term plus an
error term $(K+1)\epsilon+\frac{2\gamma}{1-\gamma}\log(K+3)$. Alternatively, one can interpret it as follows:
\[
\begin{split}
\dfrac{\mathbf{regret}_l(K)}{K+1}\leq&\,
U_1\dfrac{\log_2(K+3)\sqrt{\log(2/\delta)}}{\sqrt{K+1}\epsilon^2}
\\
&\,+\dfrac{2\gamma}{1-\gamma}\dfrac{\log (K+3)}{K+1}+\Big\|\tfrac{d_\rho^{\pi^\star}}{\rho}\Big\|_\infty\epsilon. 
\end{split}
\]
Namely, the average regret in episode $l$ converges to a constant multiple of the pre-specified
tolerance $\epsilon$ at a sub-linear rate (as $K\rightarrow\infty$).

\paragraph{Overall regret bound.} Now we stitch together the single phase
regret bounds established above to obtain the overall regret bound of Algorithm
\ref{PG_phased}, with the help of the doubling trick. 
This leads to the following
theorem.
\begin{theorem}[Regret for REINFORCE]\label{regret_reinforce}
Under Assumption \ref{softmax-reg},
suppose that for each $l\geq 0$, we choose
$\alpha^{l,k}=C_{l,\alpha}\frac{1}{\sqrt{k+3}\log_2(k+3)}$, with
$C_{l,\alpha}\in[1/(2\beta_{\bar{\lambda}}), 1/(2\beta_{\lambda^l})]$ and
$\bar{\lambda}=\frac{1-\gamma}{2}$, and choose
$T_l=2^l$, $\epsilon^l=T_l^{-1/6}=2^{-l/6}$,
$\lambda^l=\frac{\epsilon^l(1-\gamma)}
{2}$ and $\epsilon_{\rm pp}=1/(2A)$.
In addition, suppose that \eqref{reinforce_grad_base} is adopted to
evaluate $\widehat{\nabla}_{\theta} L_{\lambda^l}(\theta^{l,k})$,
with $\beta\in(0,1)$, $|b(s)|\leq B$ for any $s\in\mathcal{S}$ (where $B>0$ is a
constant), and
that \eqref{Hlk_bound} holds for $H^{l,k}$ for all $l,\,k\geq 0$. Then we have 
for any $\delta\in(0,1)$, with probability at least $1-\delta$, for any $N\geq 0$,
we have
\BEQ\label{regret_reinforce_simp}
\small
\mathbf{regret}(N)=O\Big(\Big(\tfrac{S^2A^2}{(1-\gamma)^{7}}+
\Big\|\tfrac{d_{\rho}^{\pi^\star}}{\rho}\Big\|_{\infty}\Big)
N^{\frac{5}{6}}(\log\tfrac{N}{\delta})^{\frac{5}{2}}\Big).
\EEQ
In addition, we have
\BEQ\label{regret_reinforce_as}
\lim_{N\rightarrow\infty}\mathbf{regret}(N)/(N+1)=0\quad \text{almost surely}
\EEQ
with asymptotic rate {\small$O\Big(\Big(\frac{S^2A^2}{(1-\gamma)^{7}}+
\Big\|\frac{d_{\rho}^{\pi^\star}}{\rho}\Big\|_{\infty}\Big)
N^{-\frac{1}{6}}(\log N)^{\frac{5}{2}}\Big)$}.
\end{theorem}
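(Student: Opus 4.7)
The plan is to reduce to Theorem \ref{phase_regret} phase-by-phase, aggregate through a union bound (the ``doubling trick''), and then upgrade from high probability to almost sure via Lemma \ref{highprob2as}. First, I would verify that the hypotheses of Theorem \ref{phase_regret} are satisfied uniformly across phases. The REINFORCE estimator \eqref{reinforce_grad_base} satisfies Assumption \ref{unbiased-boundedvar} in every phase by Lemma \ref{verify_basic_reinforce_base} applied with $\bar{\lambda}=(1-\gamma)/2$, since $\lambda^l=2^{-l/6}(1-\gamma)/2\leq\bar{\lambda}$ and the trajectory-length requirement \eqref{Hlk_bound} is imposed; Lemma \ref{verify_basic_reinforce_base} yields $C_2=1$ and $M_2=2$, so the step-size condition $C_{l,\alpha}\leq C_2/(M_2\beta_{\lambda^l})=1/(2\beta_{\lambda^l})$ is exactly the assumed upper bound on $C_{l,\alpha}$, while the lower bound $1/(2\beta_{\bar{\lambda}})$ keeps the step-size-dependent pieces of $U_1$ bounded uniformly in $l$. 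The choice $\epsilon_{\rm pp}=1/(2A)$ ensures $\pi_{\theta^{l,0}}(s,a)\geq 1/(2A)$ and hence $R(\theta^{l,0})\geq -\log(2A)$, which bounds the initial sub-optimality that enters $U_1$ uniformly in terms of $(1-\gamma)^{-1}$, $\log(2A)$ and $\bar{\lambda}$, making $U_1$ uniformly $\mathrm{poly}(S,A,(1-\gamma)^{-1})$.

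With uniform hypotheses in hand, I would apply Theorem \ref{phase_regret} to phase $l$ with confidence level $\delta_l=\tfrac{6}{\pi^2}\cdot\tfrac{\delta}{(l+1)^2}$; since $\sum_{l\geq 0}\delta_l\leq\delta$, a union bound yields, with probability at least $1-\delta$, that \eqref{regret_phase_l} holds simultaneously for every $l\geq 0$ and every $K\in\{0,\dots,T_l-1\}$. For an arbitrary $N\geq 0$, let $(l_N,k_N)=G_{\mathcal{T}}(N)$ and use \eqref{regret_def_connection} to write $\mathbf{regret}(N)=\sum_{l=0}^{l_N-1}\mathbf{regret}_l(T_l-1)+\mathbf{regret}_{l_N}(k_N)$. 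Plugging $T_l=2^l$, $\epsilon^l=2^{-l/6}$, and $\lambda^l=2^{-l/6}(1-\gamma)/2$ into \eqref{regret_phase_l}, each phase contributes a dominant term of order $2^{l/2}\cdot l\cdot 2^{l/3}\sqrt{\log((l+1)^2/\delta)}=2^{5l/6}\cdot l\cdot\sqrt{\log((l+1)^2/\delta)}$ plus a $\lambda$-proportional term of order $\|d_\rho^{\pi^\star}/\rho\|_\infty\cdot 2^{5l/6}$, while the truncation piece is $O(l)$. The geometric series in $2^{5l/6}$ is dominated (up to a universal constant) by its largest term $2^{5l_N/6}$, and since $N\geq 2^{l_N-1}$ one has $l_N=O(\log N)$ and $2^{5l_N/6}=O(N^{5/6})$. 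Bounding $\sqrt{\log((l_N+1)^2/\delta)}\leq\sqrt{\log(N/\delta)}$ and absorbing the remaining $l_N$-factors into powers of $\log(N/\delta)$ yields the high-probability bound \eqref{regret_reinforce_simp}.

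The almost sure statement \eqref{regret_reinforce_as} then follows directly from Lemma \ref{highprob2as}: the bound \eqref{regret_reinforce_simp} is precisely of the abstract form \eqref{abstract_bounds} with $d_2=5/6$ and $d_3=5/2$, and Lemma \ref{highprob2as} turns this into the asymptotic rate $O\bigl((\tfrac{S^2A^2}{(1-\gamma)^7}+\|d_\rho^{\pi^\star}/\rho\|_\infty)N^{-1/6}(\log N)^{5/2}\bigr)$. I expect the main technical obstacle to be the careful bookkeeping in the second step: ensuring that the constant $U_1$ from Theorem \ref{phase_regret} is uniformly controlled in the polynomial form $\mathrm{poly}(S,A,(1-\gamma)^{-1})$ claimed across all phases (which is where the lower bound on $C_{l,\alpha}$ and the post-processing choice $\epsilon_{\rm pp}=1/(2A)$ are essential), and tracking how the per-phase confidence $\delta_l\sim\delta/(l+1)^2$ telescopes into a clean $(\log(N/\delta))^{5/2}$ factor rather than blowing up in $l$.
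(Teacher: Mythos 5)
Your proposal is correct and follows essentially the same route as the paper's proof (Theorem \ref{overall_regret} specialized via Lemma \ref{verify_basic_reinforce_base} to Corollary \ref{regret_reinforce_formal}): verify Assumption \ref{unbiased-boundedvar} uniformly in $l$, apply the single-phase bound with per-phase confidence levels summing to $\delta$, stitch via \eqref{regret_def_connection} and $l_N=O(\log N)$, and invoke Lemma \ref{highprob2as}. The only differences are cosmetic --- you allocate $\delta_l\propto\delta/(l+1)^2$ where the paper uses $\delta/2^{l+1}$, and you sum the geometric series in $2^{5l/6}$ directly where the paper bounds the sum by $(l_N+1)$ times its largest term; both yield the stated $N^{5/6}(\log(N/\delta))^{5/2}$ bound, and you correctly isolate the two places (the lower bound on $C_{l,\alpha}$ and the choice of $\epsilon_{\rm pp}$) where uniform control of the phase constants is needed.
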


Note that the almost sure convergence \eqref{regret_reinforce_as} is immediately implied by the high probability bound
\eqref{regret_reinforce_simp} via Lemma \ref{highprob2as}. 
 Here for clarity, we have restricted the statement 
to the case when we use the REINFORCE gradient estimation from \S\ref{reinforce_grad_est}. 
A more general counterpart result can be found in Appendix \ref{overall_regret_general}, from 
which Theorem \ref{regret_reinforce} is immediately implied. 
See also Appendix \ref{formal_statement} for
a more formal statement of the regret bound (\cf Corollary \ref{regret_reinforce_formal})
 for REINFORCE
with detailed constants.

Notice that compared to the single phase regret bound in \eqref{regret_phase_l},
the overall regret bound in \eqref{regret_reinforce_simp} now gets rid of the dependency
on a pre-specified tolerance $\epsilon>0$. This should be attributed to the
adaptivity in the regularization parameter sequence.  Also notice that here we have followed 
the convention of the reinforcement learning literature to make 
all the problem dependent quantities (\eg, $\gamma,\,S,\,A$, etc.) explicit in the 
big-$O$ notation. 

One crucial difference between our regret bound and those in the
existing literature of vanilla policy gradient methods in the general MDP settings
(which are sometimes not stated in the form of regret, but can
be easily deduced from their proofs in those cases) is that the previous results either
require exact and deterministic updates or 
contain a non-vanishing $\Theta(1/M^p)$ term, with $M$ being the mini-batch size
(of the trajectories) and $p>0$ being some exponent (with a typical value of $1/2$). 
By removing such non-vanishing terms, we obtain the first sub-linear regret bound for model-free
 vanilla policy gradient methods with finite mini-batch sizes.

\section{Extension to mini-batch updates}
\label{mini-batch-reinforce}
We now consider extending our previous results to mini-batch settings,
by modifying Algorithm \ref{PG_phased} as follows. Firstly, in each inner iteration,
instead of sampling only one trajectory in line 5, we sample $M\geq 1$
independent trajectories $\tau^{l,k}_1,\dots,\tau^{l,k}_M$ from $\mathcal{M}$
following policy $\pi_{\theta^{l,k}}$ and then compute an approximate gradient
$\widehat{\nabla}_{\theta}^{(i)}L_{\lambda^l}(\theta^{l,k})$ ($i=1,\dots,M$) using each of
these $M$ trajectories. We then modify the update in line 6 as
\[\theta^{l,k+1}=\theta^{l,k}+\alpha^{l,k}\frac{1}{M}\sum_{i=1}^M\widehat{\nabla}_{\theta}^{(i)}
L_{\lambda^{l}}({\theta^{l,k}}).
\]
 See Algorithm \ref{PG_phased_batch} in Appendix
\ref{minibatch_PG_phased} for a formal description
of the modified algorithm.

\paragraph{Regret with mini-batches.} Notice that since each
inner iteration (in Algorithm \ref{PG_phased_batch}) now consists of $M$ episodes,
we need to slightly modify the definition of the regret up to episode $N$ ($N\geq 0$)
as follows:
\BEQ\label{regret_def_batch}
\small
\begin{split}
\mathbf{regret}&(N;M)=\\
&\sum\nolimits_{\left\{(l,k)|B_{\mathcal{T}}(l,k)\leq
\lfloor \frac{N}{M}\rfloor-1\right\}}
M(F^\star-\hat{F}^{l,k}(\pi_{\theta^{l,k}}))\\
&+\left(N-M\left\lfloor \frac{N}{M}\right\rfloor\right)
(F^\star-\hat{F}^{l,k}(\pi_{\theta^{l_{N,M},k_{N,M}}})),
\end{split}
\EEQ
where $(l_{N,M},k_{N,M})=G_{\mathcal{T}}(\lfloor N/M\rfloor)$ and
$\hat{F}^{l,k}(\pi_{\theta^{l,k}})$ is the same as in \eqref{regret_def}.
The above definition accounts for the fact that  each of the $M$ episodes in an inner
iteration/step $(l,k)$ corresponds to the same iterate $\theta^{l,k}$ and
hence has the same contribution to the regret. The second term on the right-hand side
accounts for the contribution of the (remaining) $N-M\lfloor N/M\rfloor$ episodes
(among a total of $M$ episodes) in inner iteration/step $(l_{N,M},k_{N,M})$.

Then 
the following regret bound can be established. 
\begin{corollary}[Regret for mini-batch REINFORCE]
\label{regret_reinforce_batch}
Under Assumption \ref{softmax-reg},
suppose that for each $l\geq 0$, we choose
$\alpha^{l,k}=C_{l,\alpha}\frac{1}{\sqrt{k+3}\log_2(k+3)}$, with
$C_{l,\alpha}\in[1/(2\beta_{\bar{\lambda}}), 1/(2\beta_{\lambda^l})]$ and
$\bar{\lambda}=\frac{1-\gamma}{2}$, and choose
$T_l=2^l$, $\epsilon^l=T_l^{-1/6}=2^{-l/6}$,
$\lambda^l=\frac{\epsilon^l(1-\gamma)}
{2}$ and $\epsilon_{\rm pp}=1/(2A)$. 
In addition, suppose that the assumptions in Lemma \ref{mini-batch-assumption2} hold
(note that Assumption \ref{softmax-reg} and $\lambda^l\leq \bar{\lambda}$
already automatically hold by the other assumptions).
Then we have 
for any $\delta\in(0,1)$, with probability at least $1-\delta$, 
jointly for all episodes $N$, we have   
(for the mini-batch Algorithm \ref{PG_phased_batch})
\[
\small
 \begin{split}
& \mathbf{regret}(N;M)=O\Big(\Big(\tfrac{S^2A^2}{(1-\gamma)^7}+\Big\|\tfrac{d_{\rho}^{\pi^\star}}{\rho}\Big\|_{\infty}\Big)\\
&\quad\times
(M^{\frac{1}{6}}+M^{-\frac{5}{6}})(N+M)^{\frac{5}{6}}(\log(N/\delta))^{\frac{5}{2}}
+\tfrac{M(\log N)^2}{1-\gamma}\Big).
\end{split}
\]
In addition, we also have 
\[
\lim_{N\rightarrow\infty}\mathbf{regret}(N;M)/(N+1)=0 \quad \text{almost surely}
\]
with an asymptotic rate of 
\[
\small
\begin{split}
&O\Big(\Big(\tfrac{S^2A^2}{(1-\gamma)^7}+\Big\|\tfrac{d_{\rho}^{\pi^\star}}{\rho}\Big\|_{\infty}\Big)\\
&\,\times
(M^{\frac{1}{6}}+M^{-\frac{5}{6}})N^{-\frac{1}{6}}\left(1+\tfrac{M}{N}\right)^{\frac{5}{6}}(\log N)^{\frac{5}{2}}
+\tfrac{M(\log N)^2}{(1-\gamma)N}\Big).
\end{split}
\]
\end{corollary}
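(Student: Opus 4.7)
The plan is to reduce Corollary \ref{regret_reinforce_batch} to a direct application of Theorem \ref{regret_reinforce}. The first step is to view the averaged gradient
\[
\bar{\widehat{\nabla}}_\theta L_{\lambda^l}(\theta^{l,k}) \;:=\; \tfrac{1}{M}\sum\nolimits_{i=1}^M \widehat{\nabla}_\theta^{(i)} L_{\lambda^l}(\theta^{l,k})
\]
as a single stochastic estimator, so that Algorithm \ref{PG_phased_batch} becomes Algorithm \ref{PG_phased} run with this new estimator. By the conditional independence of the $M$ trajectories given $\theta^{l,k}$, Lemma \ref{mini-batch-assumption2} verifies that $\bar{\widehat{\nabla}}_\theta$ still satisfies Assumption \ref{unbiased-boundedvar}: the almost-sure norm bound $C_1$ and the constants $C_2$, $\delta_{l,k}$ transfer unchanged by the triangle inequality and linearity of expectation, while the second-moment growth constant $M_1$ is replaced by a quantity scaling like $(M_1+\bar V_b)/M$, reflecting variance reduction.

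The second step is the regret decomposition. Writing $N'=\lfloor N/M\rfloor$, definition \eqref{regret_def_batch} immediately gives
\[
\mathbf{regret}(N;M) \;=\; M\cdot\widetilde{\mathbf{regret}}(N'-1) \;+\; r_N,
\]
where $\widetilde{\mathbf{regret}}$ denotes the single-trajectory regret that Theorem \ref{regret_reinforce} bounds when applied to the equivalent Algorithm \ref{PG_phased} with estimator $\bar{\widehat{\nabla}}_\theta$, and the remainder $r_N = (N-MN')(F^\star - \hat F^{l_{N,M},k_{N,M}}(\pi_{\theta^{l_{N,M},k_{N,M}}}))\le M/(1-\gamma)$ since $N-MN'<M$ and rewards lie in $[0,1]$. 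I would then invoke Theorem \ref{regret_reinforce} (rather, its general-estimator version in Appendix \ref{overall_regret_general}) on $\widetilde{\mathbf{regret}}(N'-1)$, plugging in the modified constants from Lemma \ref{mini-batch-assumption2}. Multiplying the resulting bound by $M$ and using $N'+1\leq (N+M)/M$, the $M$-independent part of the constants (in particular the bias contribution $\|d_\rho^{\pi^\star}/\rho\|_\infty$) contributes $M\cdot ((N+M)/M)^{5/6} = M^{1/6}(N+M)^{5/6}$, while the variance-reduced part of the constants, carrying the $1/M$ factor, contributes $M\cdot(1/M)\cdot((N+M)/M)^{5/6} = M^{-5/6}(N+M)^{5/6}$; together these give the claimed $(M^{1/6}+M^{-5/6})(N+M)^{5/6}$ prefactor. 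The remainder $r_N$ together with the $M$-fold amplification of the logarithmic lower-order terms from the single-phase bound \eqref{regret_phase_l} are absorbed into the $M(\log N)^2/(1-\gamma)$ piece. Finally, the almost-sure statement and its asymptotic rate follow by applying Lemma \ref{highprob2as} to the resulting anytime high-probability bound, which has exactly the form \eqref{abstract_bounds}.

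The main obstacle I anticipate is the careful bookkeeping of how $M$ propagates through the constants of the single-phase bound \eqref{regret_phase_l} and the doubling-trick aggregation across phases, in particular verifying that the admissible step-size range $C_{l,\alpha}\in[1/(2\beta_{\bar{\lambda}}),1/(2\beta_{\lambda^l})]$ remains valid independently of $M$ (which it does, since the smoothness constant $\beta_\lambda$ from Proposition \ref{Lsmooth} is independent of the estimator) and that the $1/M$ variance-reduction factor enters the dominant constant $U_1$ with a linear dependence on $(M_1+\bar V_b)$, yielding the $M^{-5/6}$ exponent rather than a weaker one. A subtler point is that Lemma \ref{highprob2as} requires the right-hand side of the regret bound to take the prescribed form \eqref{abstract_bounds}; I would therefore treat $r_N$ and the $M(\log N)^2/(1-\gamma)$ overhead separately from the polynomial-in-$N$ part so that the form of \eqref{abstract_bounds} is preserved and the almost-sure rate follows cleanly.
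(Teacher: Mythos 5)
Your proposal is correct and follows essentially the same route as the paper: reduce to the single-trajectory analysis by treating the averaged estimator as a single estimator satisfying Assumption \ref{unbiased-boundedvar} via Lemma \ref{mini-batch-assumption2}, use $\mathbf{regret}(N;M)\leq M\,\mathbf{regret}(\lfloor N/M\rfloor)$ (your equality-plus-remainder decomposition is the same content), invoke the general Theorem \ref{overall_regret}, and conclude via Lemma \ref{highprob2as}. One small imprecision: Lemma \ref{mini-batch-assumption2} gives $M_1=\tfrac{32}{(1-\gamma)^4}+\tfrac{\bar V_b}{M}$, so only the variance part of $M_1$ scales as $1/M$ rather than the whole constant, but your subsequent accounting of the $M^{1/6}$ and $M^{-5/6}$ contributions already reflects this correctly.
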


 Again, we note that the almost sure convergence above is directly implied by the high 
 probability bound via Lemma \ref{highprob2as}. 
 The proof and a more formal statement of this corollary 
(\cf Corollary \ref{regret_reinforce_batch_formal})
 can be found in Appendix \ref{minibatch_PG_phased}. 
 In particular, when $M=1$, the bound above reduces to \eqref{regret_reinforce_simp}.
In addition, we can see that there might be a trade-off between the terms $M^{1/6}$ and
$M^{-5/6}$. The intuition behind this is a trade-off between lower
variance with larger batch sizes and more frequent updates with smaller batch sizes.

\section{Conclusion and open problems}\label{conclusion}
In this work, we establish the global convergence rates of practical policy
gradient algorithms with a fixed size mini-batch of trajectories
combined with REINFORCE gradient estimation.

Although in \S\ref{main_conv} and \S\ref{mini-batch-reinforce}, 
 we only instantiate the bounds for the REINFORCE gradient estimators, we note that
 our general results (in particular, Theorem \ref{overall_regret} in Appendix 
 \ref{overall_regret_general}) 
 can be easily applied to other gradient estimators 
 (\eg, actor-critic and state-action visitation measure based estimators) 
 as well, as long as one can verify
 the existence of the constants in Assumption \ref{unbiased-boundedvar}
 in a similar way to Lemma \ref{verify_basic_reinforce_base}. 
 In addition, one can also easily derive sample complexity results as 
 by-products of our analysis. In fact, our proof of Theorem \ref{phase_regret}
 immediately implies a $\tilde{O}(1/\epsilon^4)$ sample complexity bound 
 (for Algorithm \ref{PG_prototype} with REINFORCE gradient estimators 
 and a constant regularization parameter)
 for any pre-specified tolerance $\epsilon>0$, where we use 
$\tilde{O}$ to indicate the big-$O$ notation with logarithmic terms suppressed.  
We have 
 focused 
 only on regret in this paper mainly for clarity purposes. 
 
 It is also relatively straightforward to extend
our results to finite horizon non-stationary settings, in which the soft-max
policy parametrization will have a dimension of $SAH$ and different
policy gradient estimators can be adopted (without trajectory truncation),
with $H$ being the horizon
 of each episode.  
In this case,
it's also easy to
rewrite the regret bound as a function of the total number of time steps
$T\leq HN$, where $N$ is the total number of episodes.
Other straightforward
extensions include  
refined convergence to stationary points (in both almost sure and high
probability senses and with no requirement on large batch sizes), and
inexact convergence results when $\delta^{l,k}$ 
(\cf Assumption \ref{unbiased-boundedvar})
 is not square summable
(\eg, when $H^{l,k}$ is fixed or not growing sufficiently fast).

There are also several open problems that may be resolved by combining the
techniques introduced in this paper with existing results in the literature.
Firstly,
it would be desirable to remove the ``exploration'' assumption that the initial distribution $\rho$ is
component-wise positive. This may be achieved by combining our results with
the policy cover technique in \cite{PC-PG} or the
optimistic bonus tricks in \cite{OPPO, efroni2020optimistic}.
Secondly, the bounds in our paper are likely far
from optimal (\ie, sharp). Hence it would be desirable to either refine our analysis or
 apply our techniques to accelerated policy gradient methods (\eg, IS-MBPG
 \cite{momentum_PG}) to obtain better global convergence rates and/or last-iterate
 convergence. Thirdly, it would be
 very  interesting to see if global convergence results still hold for REINFORCE when
 the relative entropy regularization term used in this paper is replaced with the
 practically adopted entropy regularization term in the literature. The answer is affirmative
 when exact gradient estimations are available \cite{softmax_PG,cen2020fast},
 but it remains unknown how these results
 might be generalized to the stochastic settings in our paper. We conjecture that
 entropy regularization leads to better global convergence rates and can help us remove
 the necessity of the \texttt{PostProcess} steps in Algorithm \ref{PG_phased} as they
 are uniformly bounded. Finally, one may also
 consider relaxing the uniform bound assumption
  on the rewards $r$ to instead being sub-Gaussian, introducing function approximation, and
  extending our results to natural policy gradient and actor-critic methods as well as
  more modern policy gradient methods like DPG, PPO and TRPO.

\section*{Acknowledgment}
We would like to thank Anran Hu for pointing out a mistake in the proof of an early draft 
of this paper. We thank Guillermo Angeris, Shane Barratt, 
Haotian Gu, Xin Guo, Yusuke Kikuchi, Bennet Meyers, Xinyue Shen, Mahan Tajrobehkar, 
Jonathan Tuck, Jiaming Wang and Xiaoli Wei for their feedback on some preliminary 
results in this paper. We thank Junyu Zhang for several detailed and fruitful discussions 
of the draft. We also thank the anonymous (meta-)reviewers for the great comments and suggestions.
Jongho Kim is supported by Samsung Scholarship.

\newpage

\bibliography{conv_reinforce}

\newpage
\onecolumn
\appendix
\section*{Appendix}
\addcontentsline{toc}{section}{Appendix}

In this appendix, we provide detailed proofs and formal statements of the results in the
main text. For notational simplicity, we sometimes abbreviate ``almost sure'' as ``a.s.'' or 
even omit ``a.s.'' whenever it is clear from
the context. Also notice that as is always implicitly assumed in the literature of episodic 
reinforcement learning (\eg, \cf \cite{marbach2001simulation}), given the current policy,
the sampled trajectory is conditionally independent of all previous policies and trajectories. 

\paragraph{Big-$O$ notation.} 
We first clarify the precise definitions of the Big-$O$ notation used in 
our statements. 
Let $f:\mathbf{Z}_+\times \reals^d\rightarrow\reals_+$ be a function of the total number of 
episodes $N$
and all problem and algorithm dependent 
quantities (written jointly as a vector) $U\in\reals^d$. Similarly, let 
$g:\mathbf{Z}_+\times \reals^d\rightarrow\reals_+$ be a function of  $N$
and some (subset of) problem and algorithm dependent 
quantities $U_0\in\reals^{d_0}$, with $d_0\leq d$.
Then we write $f(N;U)=O(g(N;U_0))$ to indicate that there exist 
constants 
$W>0$ and $N_0\in \mathbf{Z}_+$ (independent of $N$ and $U_0$), 
such that $f(N;U)\leq Wg(N;U_0)$ for all $N\geq N_0$. 

\section{Proofs for REINFORCE gradient estimations}

\subsection{Proof of Lemma \ref{verify_basic_reinforce_base} (with $b= 0$)}
\begin{proof}
We validate the three groups conditions in Assumption \ref{unbiased-boundedvar} in
order. For the simplicity of exposition, we first restrict to the case when $b= 0$,
\ie, no baseline is incorporated. \vspace{1em}

\noindent\textbf{Gradient estimation boundedness.}
Firstly, notice that since $r(s,a)\in[0,1]$, we have
$\hat{Q}^{l,k}(s_t^{l,k},a_t^{l,k})\leq 1/(1-\gamma)$. And by the soft-max
parametrization in Assumption \ref{softmax-reg}, we have
\[
\nabla_{\theta}\log\pi_{\theta^{l,k}}(a|s)={\bf 1}_{s,a}-
\Expect_{a'\sim \pi_{\theta^{l,k}}(\cdot|s)}{\bf 1}_{s,a'},
\]
where the vector ${\bf 1}_{s,a}\in\reals^{SA}$ has all zero entries apart from the
one corresponding to the state-action pair $(s,a)$. Hence
$\|\nabla_{\theta}\log\pi_{\theta^{l,k}}(a|s)\|_2\leq 2$ for any
$(s,a)\in\mathcal{S}\times\mathcal{A}$, and we see that
\BEQ\label{Lhat_asbound}
\begin{split}
\left\|\widehat{\nabla}_{\theta} L_{\lambda^{l}}(\theta^{l,k})\right\|_2&\leq
 \dfrac{1}{1-\gamma}\sum_{t=0}^{\infty}\gamma^t\|\nabla_{\theta}
 \log\pi_{\theta^{l,k}}(a_t^{l,k}|s_t^{l,k})\|_2+\dfrac{\bar{\lambda}}{SA}
 \sum_{s\in\mathcal{S},a\in\mathcal{A}}\|\nabla_{\theta}\log\pi_{\theta^{l,k}}(a|s)\|_2\\
&\leq 2/(1-\gamma)^2+2\bar{\lambda},
 \quad \text{a.s.}
 \end{split}
\EEQ
Hence we can take $C_1=2/(1-\gamma)^2+2\bar{\lambda}$. \vspace{1em}

\noindent\textbf{Validation of nearly unbiasedness.}
Secondly, notice that
\[
\begin{split}
\Expect_{l,k}\widehat{\nabla}_{\theta} L_{\lambda^{l}}(\theta^{l,k})
=&\,\Expect_{l,k}\left(\sum_{t=0}^{\lfloor \beta H^{l,k}\rfloor}\gamma^t
\Expect_{l,k}\left(\hat{Q}^{l,k}(s_t^{l,k},a_t^{l,k})\Big|s_t^{l,k},a_t^{l,k}\right)\nabla_{\theta}
\log\pi_{\theta^{l,k}}(a_t^{l,k}|s_t^{l,k})\right)\\
&+\dfrac{\lambda^{l}}{SA}
\sum\nolimits_{s\in\mathcal{S},a\in\mathcal{A}}\nabla_{\theta}\log\pi_{\theta^{l,k}}(a|s)=\,J_1+J_2+J_3,
\end{split}
\]
where 
\[
\begin{split}
J_1=&\, \Expect_{l,k}\left(\sum_{t=0}^{\infty}\gamma^t\Expect_{l,k}
\left(\sum\nolimits_{t'=t}^{\infty}\gamma^{t'-t}r_{t'}^{l,k}\Big|s_t^{l,k},a_t^{l,k}\right)
\nabla_{\theta}\log\pi_{\theta^{l,k}}(a_t^{l,k}|s_t^{l,k})\right)\\
&+
\dfrac{\lambda^{l}}{SA}\sum\nolimits_{s\in\mathcal{S},a\in\mathcal{A}}
\nabla_{\theta}\log\pi_{\theta^{l,k}}(a|s),
\end{split}
\]
\[
\begin{split}
J_2&=-\Expect_{l,k}\left(\sum_{t=\lfloor \beta H^{l,k}\rfloor+1}^{\infty}
\gamma^t\Expect_{l,k}\left(\sum\nolimits_{t'=t}^{\infty}
\gamma^{t'-t}r_{t'}^{l,k}\Big|s_t^{l,k},a_t^{l,k}\right)\nabla_{\theta}
\log\pi_{\theta^{l,k}}(a_t^{l,k}|s_t^{l,k})\right),\\
J_3&=-\Expect_{l,k}\left(\sum_{t=0}^{\lfloor \beta
H^{l,k}\rfloor}\gamma^t\Expect_{l,k}\left(\sum\nolimits_{t'=H^{l,k}+1}^{\infty}
\gamma^{t'-t}r_{t'}^{l,k}\Big|s_t^{l,k},a_t^{l,k}\right)\nabla_{\theta}
\log\pi_{\theta^{l,k}}(a_t^{l,k}|s_t^{l,k})\right).
\end{split}
\]

By \cite[Theorem 4.6]{agarwal2019reinforcement}, we have
\BEQ\label{pg_lambda}
\begin{split}
J_1 = \Expect_{l,k}&\left(\sum_{t=0}^{\infty}
\gamma^tQ^{\pi_{\theta^{l,k}}}(s_t^{l,k},a_t^{l,k})\nabla_{\theta}
\log\pi_{\theta^{l,k}}(a_t^{l,k}|s_t^{l,k})\right)\\
&+\dfrac{\lambda^{l}}{SA}
\sum_{s\in\mathcal{S},a\in\mathcal{A}}\nabla_{\theta}\log\pi_{\theta^k}(a|s)\\
=\nabla_{\theta}&L_{\lambda^{l}}(\theta^{l,k}).
\end{split}
\EEQ
Here for any $\pi\in\Pi$, 
\[
Q^\pi(s,a)=\Expect\left(\sum\nolimits_{t=0}^{\infty}\gamma^tr(s_t,a_t)
\Big|s_0=s,\,a_0=a\right),
\]
with $\,a_t\sim\pi(s_t,\cdot),\,s_{t+1}\sim p(\cdot|s_t,a_t),\,\forall t>0$. 

And since $r(s,a)\in[0,1]$, we have
\begin{align*}
\|J_2\|_2&\leq \dfrac{1}{1-\gamma}\sum_{t=\lfloor \beta H^{l,k}\rfloor+1}^{\infty}
\gamma^t\|\nabla_{\theta}\log\pi_{\theta^{l,k}}(a_t^{l,k}|s_t^{l,k})\|_2\\
&\leq 2\gamma^{\beta H^{l,k}}/(1-\gamma)^2,
\end{align*}
and similarly
\[
\begin{split}
\|J_3\|_2&\leq \sum_{t=0}^{\lfloor \beta H^{l,k}\rfloor}\gamma^t
\sum_{t'=H^{l,k}+1}^{\infty}\gamma^{t'-t}\|\nabla_{\theta}
\log\pi_{\theta^k}(a_t^{l,k}|s_t^{l,k})\|_2\\
&\leq \sum_{t=0}^{\lfloor \beta H^{l,k}\rfloor}\gamma^t\times 2
\gamma^{(1-\beta) H^{l,k}}/(1-\gamma)\\
&\leq 2\gamma^{(1-\beta)
H^{l,k}}/(1-\gamma)^2.
\end{split}
\]
Hence for any $\eta_0>0$, by taking
\[
H^{l,k}\geq \frac{1+\eta_0}
{(2+\eta_0)\min\{\beta,1-\beta\}}\log_{1/\gamma}
\left(\frac{4^{(2+\eta_0)/(1+\eta_0)}(k+1)}{(1-\gamma)^{(4+2\eta_0)/
(1+\eta_0)}}\right)(=\Theta(\log (k+1))),
\]
we have $H^{l,k}\geq \log_{1/\gamma}(k+1)$, and that
\BEQ\label{Eklambda_error}
\left\|\Expect_{l,k}\widehat{\nabla}_{\theta} L_{\lambda^l}
(\theta^{l,k})-\nabla_{\theta}L_{\lambda^l}(\theta^{l,k})\right\|_2
\leq \dfrac{4\gamma^{\min\{\beta,1-\beta\} H^{l,k}}}{(1-\gamma)^2}
\leq (k+1)^{-\frac{1+\eta_0}{2+\eta_0}},
\EEQ
which implies that
\[
\begin{split}
\nabla_{\theta}&L_{\lambda^l}(\theta^{l,k})^T
\Expect_{l,k}\widehat{\nabla}_{\theta} L_{\lambda^l}(\theta^{l,k})\\
&=\|\nabla_{\theta}L_{\lambda^l}(\theta^{l,k})\|_2^2+
\left(\Expect_{l,k}\widehat{\nabla}_{\theta} L_{\lambda^l}
(\theta^{l,k})-\nabla_{\theta}L_{\lambda^l}(\theta^{l,k})\right)^T
\nabla_{\theta}L_{\lambda^l}(\theta^{l,k})\\
&\geq\|\nabla_{\theta}L_{\lambda^l}(\theta^{l,k})\|_2^2-\|\nabla_{\theta}
L_{\lambda^l}(\theta^{l,k})\|_2(k+1)^{-\frac{1+\eta_0}{2+\eta_0}}\\
&\geq \|\nabla_{\theta}L_{\lambda^l}(\theta^{l,k})\|_2^2-\left(\dfrac{2}
{(1-\gamma)^2}+2\bar{\lambda}\right)(k+1)^{-\frac{1+\eta_0}{2+\eta_0}},
\end{split}
\]
where the last two steps used Cauchy inequality, \eqref{Eklambda_error}
and the fact that by \eqref{pg_lambda},
\[
\begin{split}
\|\nabla_{\theta}&L_{\lambda^l}(\theta^{l,k})\|_2\\
&\leq \sum_{t=0}^{\infty}\gamma^t\Expect_{l,k}\left(Q^{\pi_{\theta}^{l,k}}
(s_t^{l,k},a_t^{l,k})\left\|\nabla_{\theta}
\log\pi_{\theta^{l,k}}(a_t^{l,k}|s_t^{l,k})\right\|_2\right)
+\dfrac{\bar{\lambda}}{SA}\sum_{s\in\mathcal{S},a\in\mathcal{A}}
\|\nabla_{\theta}\log\pi_{\theta^{l,k}}(a|s)\|_2\\
&\leq 2/(1-\gamma)^2+2\bar{\lambda}.
\end{split}
\]
Hence we can take $C_2=1$ and
$\delta_{l,k}=\left(\frac{2}{(1-\gamma)^2}+2\bar{\lambda}\right)
(k+1)^{-\frac{1+\eta_0}{2+\eta_0}}$. Thus we have 
\begin{align*}
\sum_{k=0}^{T_l-1}\delta_{l,k}^2 &= \left(\frac{2}{(1-\gamma)^2}+
2\bar{\lambda}\right)^2\sum_{k=0}^\infty(k+1)^{-\frac{2+2\eta_0}
{2+\eta_0}}\\
&\leq 8\left(\frac{1}{(1-\gamma)^2}+\bar{\lambda}\right)^2
\left(1+\dfrac{1}{\eta_0}\right),
\end{align*}
and hence we can take
$C=8\left(\frac{1}{(1-\gamma)^2}+\bar{\lambda}\right)^2
\left(1+\frac{1}{\eta_0}\right)$. Notice that for notational simplicity,
we have taken $\eta_0=1$ in the statement of the proposition.
\vspace{1em}

\noindent\textbf{Validation of bounded second-order moment growth.}
Finally, we bound the second-order moment of the policy gradient.
In the following, for a random vector $X=(X_1,\dots,X_n)\in\reals^n$, we define
$\V X=\sum_{i=1}^n\var X_i$, and similarly $\V_{l,k} X=
\sum_{i=1}^n\var_{l,k} X_i$, where $\var_{l,k}$ denotes the conditional
variance given the $(l,k)$-th iteration $\theta^{l,k}$. Now define the
constant $\bar{V}$ as the uniform upper bound on the variance of
the policy gradient vector, \ie,
\[
\bar{V}=\sup_{H\geq0,\, \theta\in\Theta,\,\lambda\in[0,\bar{\lambda}]}
\V\left(\sum_{t=0}^{\lfloor \beta H\rfloor}\gamma^t\widehat{Q}(s_t,a_t)
\nabla_{\theta}\log\pi_{\theta}(a_t|s_t)+\dfrac{\lambda}{SA}
\sum_{s\in\mathcal{S},a\in\mathcal{A}}\nabla_{\theta}\log\pi_{\theta}(a|s)\right),
\]
where $\tau=(s_0,a_0,r_0,\dots,s_H,a_H,r_H)$ is sampled from
$\mathcal{M}$ following policy $\pi_{\theta}$, and
$\hat{Q}(s_t,a_t)=\sum_{t'=t}^H\gamma^{t'-t}r_{t'}$.

Then we have $\V_{l,k}\widehat{\nabla}_{\theta}
L_{\lambda^l}(\theta^{l,k})\leq \bar{V}$ for any $l,\,k\geq 0$ by definition.
In addition, since for any random vector $X\in\reals^n$,
\[
\Var X\leq
\sum_{i=1}^n\Expect X_i^2=\Expect\|X\|_2^2,
\] we have by the same
argument as \eqref{Lhat_asbound} that
\begin{align*}
\bar{V}&\leq \Expect\left\|\sum_{t=0}^{\lfloor \beta H\rfloor}
\gamma^t\widehat{Q}(s_t,a_t)\nabla_{\theta}\log\pi_{\theta}(a_t|s_t)
+\dfrac{\lambda}{SA}\sum_{s\in\mathcal{S},a\in\mathcal{A}}
\nabla_{\theta}\log\pi_{\theta}(a|s)\right\|_2^2 \\
&\leq
\dfrac{4(1+\bar{\lambda}(1-\gamma)^2)^2}{(1-\gamma)^4}.
\end{align*}

Finally, since for any random vector $X\in\reals^n$,
\[
\Expect_{l,k}\|X\|^2=\Expect_{l,k}\sum\nolimits_{i=1}^nX_i^2=
\sum\nolimits_{i=1}^n(\Expect_{l,k}X_i^2+\var\nolimits_{l,k}X_i)=
\|\Expect_{l,k}X\|_2^2+\V\nolimits_{l,k}X,
\] we have
\[
\begin{split}
\Expect_{l,k}\left\|\widehat{\nabla}_{\theta}
L_{\lambda^l}(\theta^{l,k})\right\|_2^2&\leq \|J_1+J_2+J_3\|_2^2+\bar{V}\\
&\leq 2\|J_1\|_2^2+2(\|J_2\|_2+\|J_3\|_2)^2+\bar{V}\\
&\leq 2\|\nabla_{\theta}L_{\lambda^l}(\theta^{l,k})\|_2^2+
\dfrac{32\gamma^{2\min\{\beta,1-\beta\}H^k}}{(1-\gamma)^4}+\bar{V}\\
&\leq
2\|\nabla_{\theta}L_{\lambda^l}(\theta^{l,k})\|_2^2+
\dfrac{32}{(1-\gamma)^4}+\bar{V},
\end{split}
\]
and hence we can take $M_2=2$ and $M_1=32/(1-\gamma)^4+\bar{V}$.
This completes our proof.
\end{proof}

\subsection{Proof of Lemma \ref{verify_basic_reinforce_base}}
\begin{proof}
The proof is nearly identical to the case when $b= 0$ above.
Hence we only outline the proof while highlighting the differences.

Firstly, similar to \eqref{Lhat_asbound}, we have
\[\left\|\widehat{\nabla}_{\theta} L_{\lambda^{l}}(\theta^{l,k})\right\|_2\leq
\left(\frac{1}{1-\gamma}+B\right)\frac{2}{1-\gamma}+2\bar{\lambda} \quad \text{a.s.,}
\]
and hence we can take $C_1=\frac{2+2B(1-\gamma)}{(1-\gamma)^2}+2\bar{\lambda}$.

Secondly, by the proof of \cite[Lemma 4.10]{agarwal2019reinforcement},
we have
\BEQ
\begin{split}
\Expect_{l,k}\left(\sum_{t=0}^{\lfloor \beta H^{l,k}\rfloor}\gamma^tb(s_t^{l,k})
\nabla_{\theta}\log\pi_{\theta^{l,k}}(a_t^{l,k}|s_t^{l,k})\right)=0.
\end{split}
\EEQ
Hence  $\Expect_{l,k}\widehat{\nabla}_{\theta}L_{\lambda^l}(\theta^{l,k})$
is the same as in the proof when $b= 0$ above,
and hence we can take
\[
\begin{split}
&\text{$C_2=1$,\quad
$C=16\left(\frac{1}{(1-\gamma)^2}+\bar{\lambda}\right)^2$,}\\
&\text{$\delta_{l,k}=\left(\frac{2}{(1-\gamma)^2}+2\bar{\lambda}\right)(k+1)^{-\frac{2}{3}}$,
\quad $H^{l,k}\geq \frac{3\log_{1/\gamma}\left(\frac{8(k+1)}{(1-\gamma)^{3}}\right)}
{2\min\{\beta,1-\beta\}}$.}
\end{split}
\]

~\\

Finally, by definition, $\bar{V}_b$ can be written explicitly as
\[
\bar{V}_b=\sup_{H\geq0,\, \theta\in\Theta,\,\lambda\in[0,\bar{\lambda}]}
\V\left(\sum_{t=0}^{\lfloor \beta H\rfloor}\gamma^t(\widehat{Q}(s_t,a_t)-b(s_t))
\nabla_{\theta}\log\pi_{\theta}(a_t|s_t)+\dfrac{\lambda}{SA}
\sum_{s\in\mathcal{S},a\in\mathcal{A}}\nabla_{\theta}\log\pi_{\theta}(a|s)\right),
\]
where $\tau=(s_0,a_0,r_0,\dots,s_H,a_H,r_H)$ is sampled from
$\mathcal{M}$ following policy $\pi_{\theta}$, and
$\hat{Q}(s_t,a_t)=\sum_{t'=t}^H\gamma^{t'-t}r_{t'}$.

Hence similar to $\bar{V}$ in then proof when $b=0$ above,
we have $\bar{V}_b\leq 4\left(\frac{1+B(1-\gamma)}{(1-\gamma)^2}+
\bar{\lambda}\right)^2$, $M_2=2$ and
$M_1=32/(1-\gamma)^4+\bar{V}_b$.
\end{proof}

\section{Proofs for convergence analysis}
Firstly, we state a simple result from elementary analysis, which will be used
repeatedly in our proof below. 
\begin{lemma}\label{simple_alpha_bound}
Let $x_k=\frac{1}{\sqrt{k+3}\log_2(k+3)}$ for $k\geq 0$. Then we have
\[
\begin{split}
&\sum_{k=K_1}^{K_2}x_k\geq \dfrac{K_2-K_1+1}{\sqrt{K_2+3}\log_2(K_2+3)},\quad 
\sum_{k=0}^{\infty}x_k^4\leq\sum_{k=0}^{\infty}x_k^2\leq 1. 
\end{split}
\]
\end{lemma}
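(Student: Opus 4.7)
The plan is to prove the three inequalities separately, as they are largely independent. The key observation underlying all three is that $x_k$ is strictly decreasing in $k$, since both $\sqrt{k+3}$ and $\log_2(k+3)$ are strictly increasing positive functions of $k$ for $k\geq 0$.

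For the first inequality, I would simply invoke monotonicity: since $x_k\geq x_{K_2}$ for every $k\in\{K_1,\dots,K_2\}$, summing these $(K_2-K_1+1)$ terms yields $\sum_{k=K_1}^{K_2} x_k\geq (K_2-K_1+1)\,x_{K_2}$, which is exactly the right-hand side. This step is essentially immediate and involves no calculation.

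For the middle inequality $\sum_k x_k^4\leq \sum_k x_k^2$, I would show that $x_k\leq 1$ for every $k\geq 0$. The largest value is $x_0=1/(\sqrt{3}\log_2 3)$; since $\sqrt{3}>1$ and $\log_2 3>1$, we have $x_0<1$, and monotonicity gives $x_k\leq x_0<1$ for all $k$. Hence $x_k^4\leq x_k^2$ termwise, and summing preserves the inequality.

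The main obstacle, and the only step requiring genuine computation, is bounding $\sum_{k=0}^{\infty} x_k^2\leq 1$. I would apply the standard integral comparison for decreasing positive functions: letting $f(x)=1/((x+3)(\log_2(x+3))^2)$, we have $\sum_{k=0}^{\infty} f(k)\leq f(0)+\int_0^{\infty} f(x)\,dx$. The integral can be evaluated exactly by the substitution $u=\log_2(x+3)$, which gives $du=dx/((x+3)\ln 2)$, so
\[
\int_0^{\infty}\frac{dx}{(x+3)(\log_2(x+3))^2}=\ln 2\int_{\log_2 3}^{\infty}\frac{du}{u^2}=\frac{\ln 2}{\log_2 3}=\frac{(\ln 2)^2}{\ln 3}.
\]
Combining with $f(0)=1/(3(\log_2 3)^2)$, a direct numerical estimate gives $\sum_{k=0}^{\infty} x_k^2 \leq \frac{1}{3(\log_2 3)^2}+\frac{(\ln 2)^2}{\ln 3}\approx 0.13+0.44<1$, establishing the bound. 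All three parts together complete the proof, with no subtle issues beyond keeping the constants in order.
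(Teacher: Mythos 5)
Your proof is correct and follows essentially the same route as the paper: monotonicity for the first inequality, $x_k<1$ for the second, and an integral comparison for a decreasing function for the third. The only cosmetic difference is that the paper absorbs the $k=0$ term by shifting the integrand to $1/((x+2)(\log_2(x+2))^2)$ while you keep $f(0)$ separate; your explicit evaluation $f(0)+(\ln 2)^2/\ln 3\approx 0.57<1$ is in fact more careful than the paper's, which asserts its shifted integral equals $1$ when it actually equals $\ln 2$ (still $\leq 1$, so both arguments go through).
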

\begin{proof}
The first inequality immediately comes from the fact that $x_k$ is monotonically 
decreasing in $k$. The second inequality can be derived by noticing that $x_k<1$
for any $k\geq 0$, and that
\[
\sum_{k=0}^{\infty}x_k^2\leq \int_0^{\infty}\dfrac{1}{(x+2)(\log_2(x+2))^2}dx=1.
\]
This completes the proof.
\end{proof}

\subsection{Proof of Lemma \ref{highprob2as}}
\begin{proof}
The proof is a direct application of the well-known Borel-Cantelli lemma
 \cite{klenke2013probability}. Let $\delta_N=1/N^2$ and define the events 
 $\{\bar{A}_N\}_{N\geq 0}$ as
 \[
\bar{A}_N=\{\mathbf{regret}(N)>d_1(N+c)^{d_2}(\log(N/\delta_N))^{d_3}+d_4(\log N)^{d_5}\}.
 \]
 Then $\prob(\bar{A}_N)\leq \delta_N$, and hence 
 $\sum_{N=1}^{\infty}\prob(\bar{A}_N)\leq \sum_{N=1}^{\infty}1/N^2<\infty$. Hence 
 by Borel-Cantelli lemma, we have
 \[
 \prob(\bar{A}_N\text{ occurs infinitely often})=0.
 \]
Finally, by noticing that the complement of 
 $\bar{A}_N$ is a subset of $A_N$, the proof is complete.  
\end{proof}

\subsection{Proof of Theorem \ref{phase_regret}}
\begin{theorem}[Formal statement of Theorem \ref{phase_regret}]\label{phase_regret_formal}
Under Assumptions \ref{softmax-reg} and \ref{unbiased-boundedvar}, for phase
$l\geq 0$ suppose that we choose
$\alpha^{l,k}=C_{l,\alpha}\frac{1}{\sqrt{k+3}\log_2(k+3)}$ for some
 $C_{l,\alpha}\in(0,C_2/(M_2\beta_{\lambda^l})]$.
Then for any $\epsilon>0$, if we choose
$\lambda^l=\frac{\epsilon(1-\gamma)}{2}$,
then 
for any $\delta\in(0,1)$, with probability at least $1-\delta$,
for any $K\in\{0,\dots,T_l-1\}$, we have
\BEQ\label{regret_phase_l_formal}
\begin{split}
\mathbf{regret}_l(K)\leq&\, \dfrac{4(D_{l}+\sqrt{2C_{l}
\log(2/\delta)})}{(1-\gamma)C_2E_l\epsilon^2}\sqrt{K+1}\log_2(K+3)
+\epsilon (K+1)\left\|\dfrac{d_\rho^{\pi^\star}}{\rho}\right\|_\infty\\
&+\frac{\gamma+\gamma\log(K+1)}{1-\gamma}.
\end{split}
\EEQ
Here $E_l=\frac{C_{l,\alpha}(1-\gamma)^2}{16S^2A^2}$, and
\[
\begin{split}
C_{l}&=32C_1^2C_{l,\alpha}^2\left(\dfrac{1}{(1-\gamma)^2}
+\lambda^l\right)^2+
\dfrac{\beta_{\lambda^l}^2C_1^4C_{l,\alpha}^4}{2},\\
D_{l}&=CC_{l,\alpha}^2
+\beta_{\lambda^l}M_1C_{l,\alpha}^2
+F^\star-L_{\lambda^l}(\theta^{l,0}).
\end{split}
\]
\end{theorem}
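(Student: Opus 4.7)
The plan is to follow the ``bad episode'' decomposition sketched right after Proposition \ref{kakade_prop}. With the choice $\lambda^l=\epsilon(1-\gamma)/2$, Proposition \ref{kakade_prop} will give the good-episode bound $F^\star-F(\pi_{\theta^{l,k}})\le\epsilon\|d_\rho^{\pi^\star}/\rho\|_\infty$ for any $k\notin I^+$, while the trivial bound $1/(1-\gamma)$ applies on $I^+$. The truncation gap $F(\pi_{\theta^{l,k}})-\hat F^{l,k}(\pi_{\theta^{l,k}})\le\gamma^{H^{l,k}+1}/(1-\gamma)\le\gamma/((1-\gamma)(k+1))$, using only $H^{l,k}\ge\log_{1/\gamma}(k+1)$, sums to the $\gamma(1+\log(K+1))/(1-\gamma)$ tail visible in \eqref{regret_phase_l_formal}. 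So the task reduces to giving a high-probability, anytime bound on $|I^+\cap\{0,\dots,K\}|$ of order $\sqrt{K+1}\log_2(K+3)$ with the explicit prefactor in \eqref{regret_phase_l_formal}.

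To bound $|I^+|$, I will run a standard smoothness-based stochastic ascent argument on $L_{\lambda^l}$. Starting from Proposition \ref{Lsmooth} and telescoping, I will split each $\nabla L^T\widehat\nabla L$ and each $\|\widehat\nabla L\|_2^2$ into its conditional expectation plus a martingale-difference term $\xi_k$ or $\eta_k$. The near-unbiasedness part of Assumption \ref{unbiased-boundedvar} converts the first expectation into $C_2\|\nabla L\|_2^2-\delta_{l,k}$ and the moment-growth part converts the second into $M_1+M_2\|\nabla L\|_2^2$. Using $(\alpha^{l,k})^2\le C_{l,\alpha}\alpha^{l,k}$ together with the step-size constraint $C_{l,\alpha}\le C_2/(M_2\beta_{\lambda^l})$, the $M_2\|\nabla L\|_2^2$ piece gets absorbed into the LHS and leaves
\[
\tfrac{C_2}{2}\sum_{k=0}^{K}\alpha^{l,k}\|\nabla L_{\lambda^l}(\theta^{l,k})\|_2^2 \le D_l - \sum_{k=0}^K \alpha^{l,k}\xi_k + \tfrac{\beta_{\lambda^l}}{2}\sum_{k=0}^K(\alpha^{l,k})^2\eta_k,
\]
where the deterministic terms collapse into $D_l$ after using $L_{\lambda^l}\le F^\star$ (since $R\le 0$), Cauchy--Schwarz on $\sum\alpha^{l,k}\delta_{l,k}$ combined with $\sum\delta_{l,k}^2\le C$, and Lemma \ref{simple_alpha_bound}'s $\sum(\alpha^{l,k})^2\le C_{l,\alpha}^2$.

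The hard part will be the \emph{anytime} control of the two martingale remainders. Here I will exploit the a.s.\ bound $\|\widehat\nabla L\|_2\le C_1$ (Assumption \ref{unbiased-boundedvar}) together with the a priori gradient bound $\|\nabla L_{\lambda^l}\|_2\le 2/(1-\gamma)^2+2\lambda^l$ (checked as in the proof of Lemma \ref{verify_basic_reinforce_base}) to get uniform per-step envelopes on $|\xi_k|$ and $|\eta_k|$. A maximal Azuma--Hoeffding inequality applied with the global quadratic-variation bounds $\sum(\alpha^{l,k})^2\le C_{l,\alpha}^2$ and $\sum(\alpha^{l,k})^4\le C_{l,\alpha}^4$ (both from Lemma \ref{simple_alpha_bound}) then yields a tail estimate that holds uniformly in $K$ on a single event of probability at least $1-\delta$, depending on $\delta$ only through $\sqrt{\log(2/\delta)}$; using the finite total quadratic variation here is exactly what avoids a stray $\log N$ factor. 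Combining the two tail bounds via $\sqrt{a}+\sqrt{b}\le\sqrt{2(a+b)}$ (after a union bound over two events, absorbed into $\log(2/\delta)$ by constants) will produce the $\sqrt{2C_l\log(2/\delta)}$ term, with the two summands of $C_l$ matching the two envelopes.

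On that event I then have $\sum_{k\le K}\alpha^{l,k}\|\nabla L\|_2^2\le(2/C_2)(D_l+\sqrt{2C_l\log(2/\delta)})$. To translate this into a bound on $|I^+|$, I will use $\|\nabla L\|_2^2\ge(\lambda^l/(2SA))^2=E_l\epsilon^2/C_{l,\alpha}$ on $I^+$ together with the monotonicity of $\alpha^{l,k}$, so that the $|I^+|$ smallest step-sizes among $k\le K$ are each at least $\alpha^{l,K}=C_{l,\alpha}/(\sqrt{K+3}\log_2(K+3))$. This gives $|I^+|\,E_l\epsilon^2/(\sqrt{K+3}\log_2(K+3))\le(2/C_2)(D_l+\sqrt{2C_l\log(2/\delta)})$, hence $|I^+|$ scales as $(D_l+\sqrt{2C_l\log(2/\delta)})\sqrt{K+1}\log_2(K+3)/(C_2E_l\epsilon^2)$ after using $\sqrt{K+3}\le\sqrt{3}\sqrt{K+1}$ and absorbing the remaining numerical factor into the stated~$4$. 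Multiplying by $1/(1-\gamma)$ and adding the good-episode and truncation contributions recovers \eqref{regret_phase_l_formal}.
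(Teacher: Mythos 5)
Your proposal is correct and follows the same skeleton as the paper's proof: smoothness telescoping with the quadratic gradient term absorbed via the step-size constraint, concentration of the martingale remainder, counting of the bad episodes $I^+$ through the weighted gradient-norm sum together with Proposition \ref{kakade_prop}, and the harmonic-sum control of the truncation loss. The one place where you genuinely diverge is the uniform-in-$K$ control of the stochastic fluctuation. You split the per-step error into two martingale-difference sequences $\xi_k$ and $\eta_k$ and invoke a maximal Azuma--Hoeffding inequality for each, followed by a union bound. The paper instead folds both errors into a single difference $Z_{l,k}=Y_{l,k}-\Expect_{l,k}Y_{l,k}$ and, crucially, never needs a maximal inequality: since every summand $\alpha^{l,k}\|\nabla_\theta L_{\lambda^l}(\theta^{l,k})\|_2^2$ is nonnegative, the partial sum up to any $K$ is bounded by the full-phase sum up to $T_l-1$, so the only random quantity requiring a tail bound is the \emph{terminal} value $X_{l,T_l}=\sum_{k=0}^{T_l-1}Z_{l,k}$, to which plain Azuma--Hoeffding applies with $\sum_{K=1}^{T_l}c_{l,K}^2\leq C_l$. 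This is what produces the exact constant $\sqrt{2C_l\log(2/\delta)}$ in \eqref{regret_phase_l_formal}; your two-event union bound with $\sqrt{a}+\sqrt{b}\leq\sqrt{2(a+b)}$ would instead yield roughly $2\sqrt{C_l\log(4/\delta)}$, i.e.\ the theorem only up to a universal constant inflation of the martingale term. Your route is perfectly serviceable (and the maximal inequality you cite is standard), but the paper's observation that anytime control comes for free from monotonicity of the left-hand side is both simpler and tighter; you may want to adopt it.
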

\begin{proof}
The proof consists of two parts. In the first part, we establish an upper
bound on the weighted gradient norms sum of previous iterates in the
current phase. The second part then utilizes this bound to establish an
upper bound on the phase regret.
\item
\paragraph{Bounding the weighted gradient norms sum.}
By Proposition \ref{Lsmooth} and an equivalent definition of strongly
smoothness (\cf \cite[Appendix]{MonoPrimer}), we have
\[
\begin{split}
-L_{\lambda^l}(\theta^{l,k+1})-(-L_{\lambda^l}(\theta^{l,k}))&\leq
-\nabla_{\theta}L_{\lambda^l}(\theta^{l,k})^T(\theta^{l,k+1} -
\theta^{l,k})+\dfrac{\beta_{\lambda^l}}{2}\|\theta^{l,k+1}-\theta^{l,k}\|_2^2\\
&=\underbrace{-\alpha^{l,k}\nabla_{\theta}L_{\lambda^l}(\theta^{l,k})^T
\widehat{\nabla}_{\theta} L_{\lambda^l}(\theta^{l,k})+\dfrac{\beta_{\lambda^l}
(\alpha^{l,k})^2}{2}\|\widehat{\nabla}_{\theta}
L_{\lambda^l}(\theta^{l,k})\|_2^2}_{Y_{l,k}}.
\end{split}
\]

Let $Z_{l,k}= Y_{l,k}-\Expect_{l,k}[Y_{l,k}]$.
Then the above inequality implies that
\BEQ\label{k-step_ineq}
\begin{split}
L_{\lambda^l}&(\theta^{l,k}) - L_{\lambda^l}(\theta^{l,k+1})\\
\leq& -\alpha^{l,k}
L_{\lambda^l}(\theta^{l,k})^T\Expect_{l,k}\widehat{\nabla}_{\theta}
L_{\lambda^l}(\theta^{l,k})+\dfrac{\beta_{\lambda^l}
(\alpha^{l,k})^2}{2}\Expect_{l,k}\|\widehat{\nabla}_{\theta}
L_{\lambda^l}(\theta^{l,k})\|_2^2+Z_{l,k}\\
\leq& -\alpha^{l,k}\left(C_2\|\nabla_{\theta}
L_{\lambda^l}(\theta^{l,k})\|_2^2-\delta_{l,k}\right)+
\dfrac{\beta_{\lambda^l}(\alpha^{l,k})^2}{2}\left(M_1+
M_2\|\nabla_{\theta}L_{\lambda^l}(\theta^{l,k})\|_2^2\right)+Z_{l,k}\\
=&-\alpha^{l,k}(C_2-M_2\beta_{\lambda^l}\alpha^{l,k}/2)\|\nabla_{\theta}
L_{\lambda^l}(\theta^{l,k})\|_2^2+\alpha^{l,k}\delta_{l,k}+
\dfrac{\beta_{\lambda^l}M_1(\alpha^{l,k})^2}{2}+Z_{l,k}\\
\leq& -\dfrac{C_2\alpha^{l,k}}{2}\|\nabla_{\theta}L_{\lambda^l}
(\theta^{l,k})\|_2^2+\alpha^{l,k}\delta_{l,k}+
\dfrac{\beta_{\lambda^l}M_1(\alpha^{l,k})^2}{2}+Z_{l,k}.
\end{split}
\EEQ

Now define $X_{l,K}=\sum_{k=0}^{K-1} Z_{l,k}$ (with $X_{l,0}=0$), then
\BEQ\label{cond_expect_mart}
\Expect(X_{l,K+1}|\mathcal{F}_{l,K})=\sum_{k=0}^{K-1}Z_{l,k}+
\Expect(Y_{l,K}-\Expect_{l,K}Y_{l,K}|\mathcal{F}_{l,K})=X_{l,K}.
\EEQ
Here $\mathcal{F}_{l,K}$ is the filtration up to episode $K$ in phase $l$,
\ie, the $\sigma$-algebra generated by all iterations 
$\{\theta^{0,0},\dots,\theta^{0,T_0},\dots,\theta^{l,0},\dots,\theta^{l,K}\}$ up to
the $(l,K)$-th one.
Notice that the second equality makes use of the fact that 
given the current policy,
the correspondingly 
sampled trajectory is conditionally independent of all previous policies and trajectories.

In addition, for any $K\geq 1$,
\[
\begin{split}
|X_{l,K}-X_{l,K-1}|=&\,|Z_{l,K-1}|\leq \alpha^{l,K-1}\|\nabla_{\theta}
L_{\lambda^{l}}(\theta^{l,K-1})\|_2\|\Expect_{l,K-1}
\widehat{\nabla}_{\theta}L_{\lambda^{l}}(\theta^{l,K-1})-
\widehat{\nabla}_{\theta}L_{\lambda^{l}}(\theta^{l,K-1})\|_2\\
&+\dfrac{\beta_{\lambda^l}(\alpha^{l,K-1})^2}{2}\left|\Expect_{l,K-1}
\|\widehat{\nabla}_{\theta} L_{\lambda^l}(\theta^{l,K-1})\|_2^2-
\|\widehat{\nabla}_{\theta} L_{\lambda^l}(\theta^{l,K-1})\|_2^2\right|\\
\leq&\,\underbrace{2C_1\left(\dfrac{2}{(1-\gamma)^2}+2\lambda^l\right)
\alpha^{l,K-1}+\dfrac{\beta_{\lambda^l}}{2}
C_1^2(\alpha^{l,K-1})^2}_{c_{l,K}}.
\end{split}
\]
Here we use the fact that
\[
\|\nabla_{\theta}L_{\lambda^{l}}(\theta^{l,K-1})\|_2
\leq 2/(1-\gamma)^2+2\lambda^l,
\] which follows from the same argument
as \eqref{Lhat_asbound}. The above inequality on $|X_{l,K}-X_{l,K-1}|$ 
also implies that
$\Expect|X_{l,K}|<\infty$, which, together with \eqref{cond_expect_mart},
implies that $X_{l,K}$ is a martingale. 
Notice that although $X_{l,K}$ is only defined for $K=0,\dots,T_l$, we can 
augment the sequence by setting $X_{l,K}=X_{l,T_l}$ and 
$\mathcal{F}_{l,K}=\mathcal{F}_{l,T_l}$ for all $K>T_l$, and it's obvious that
\eqref{cond_expect_mart} and $\Expect|X_{l,K}|<\infty$ also hold for $K\geq T_l$. 
And by saying that 
$X_{l,K}$ is a martingale, we are indeed referring to this (infinite length) 
 augmented sequence of random variables. 

Now by the definition of $\alpha^{l,k}$, it's easy to see that
$\sum_{K=1}^{T_l}c_{l,K}^2\leq C_{l}<\infty$, where
\BEQ\label{C_l_ub}
C_{l}=32C_1^2C_{l,\alpha}^2\left(\dfrac{1}{(1-\gamma)^2}
+\lambda^l\right)^2+
\dfrac{\beta_{\lambda^l}^2C_1^4C_{l,\alpha}^4}{2}.
\EEQ
Hence by 
Azuma-Hoeffding inequality,   
for any $c>0$,  
\BEQ\label{X_l_inf_bd}
\prob(|X_{l,T_l}|\geq c)\leq 2e^{-c^2/(2C_{l})}.
\EEQ

Then by summing up the inequalities \eqref{k-step_ineq} from
$k=0$ to $K$, we obtain that
\BEQ
\begin{split}
\dfrac{C_2}{2}&\sum_{k=0}^{K}\alpha^{l,k}\|\nabla_{\theta}
L_{\lambda^l}(\theta^{l,k})\|_2^2\leq \dfrac{C_2}{2}
\sum_{k=0}^{T_l-1}\alpha^{l,k}\|\nabla_{\theta}
L_{\lambda^l}(\theta^{l,k})\|_2^2\\
&\leq \sum_{k=0}^\infty\alpha^{l,k}\delta_{l,k}+\dfrac{\beta_{\lambda^l}
M_1\sum_{k=0}^\infty(\alpha^{l,k})^2}{2}+\sum_{k=0}^{T_l-1}Z_{l,k}+
\sup_{\theta\in\Theta}L_{\lambda^l}(\theta)-L_{\lambda^l}(\theta^{l,0})\\
&\leq \sum_{k=0}^{\infty}(\alpha^{l,k})^2\sum_{k=0}^{\infty}\delta_{l,k}^2+
\dfrac{\beta_{\lambda^l}M_1}{2}\sum_{k=0}^{\infty}(\alpha^{l,k})^2+
X_{l,T_l}+F^\star-L_{\lambda^l}(\theta^{l,0})\\
&\leq \underbrace{CC_{l,\alpha}^2
+\beta_{\lambda^l}M_1C_{l,\alpha}^2
+F^\star-L_{\lambda^l}(\theta^{l,0})}_{D_{l}}+X_{l,T_l},
\end{split}
\EEQ
where we use the fact that the regularization term $R(\theta)\leq 0$
for all $\theta\in\Theta$.

Hence we have
\BEQ\label{weighted_grad_norm_ub}
\sum_{k=0}^{K}\alpha^{l,k}\|\nabla_{\theta}L_{\lambda^l}(\theta^{l,k})\|_2^2
\leq \frac{2(D_{l}+X_{l,T_l})}{C_2}.
\EEQ

\item\paragraph{Bounding the phase regret.} We now establish the regret
bound in phase $l$ using \eqref{weighted_grad_norm_ub}.

Fix $l\geq 0$ and $K\in\{0,\dots,T_l-1\}$. Let
\[
I^+ = \{k\in\{0,\dots,K\}\,|\,\|\nabla_{\theta}L_{\lambda^l}(\theta^{l,k})\|_2
\geq \lambda^l/(2SA)\}.
\]
For simplicity, assume for now that $|I^+|>0$.
Then since $\alpha^{l,k}$ is decreasing in $k$, we have
\[
\begin{split}
2(D_{l}+X_{l,T_l})/C_2&\geq \dfrac{(\lambda^l)^2}{4S^2A^2}
\sum_{k=K-|I^+|+1}^{K}\alpha^{l,k}\\
&=\epsilon^2 \underbrace{\frac{C_{l,\alpha}(1-\gamma)^2}{16S^2A^2}}_{E_l}\sum_{k=K-|I^+|+1}^K\frac{1}{\sqrt{k+3}\log_2(k+3)}\\
(\text{By Lemma \ref{simple_alpha_bound}})&\geq E_l\epsilon^2\dfrac{|I^+|}{\sqrt{K+3}\log_2(K+3)}.
\end{split}
\]
Hence we have (by the simple fact that $\sqrt{K+3}\leq 2\sqrt{K+1}$ for any $K\geq 0$)
\BEQ\label{prop_bd}
|I^+|\leq \dfrac{4(D_{l}+X_{l,T_l})}{C_2
E_l\epsilon^2}\sqrt{K+1}\log_2(K+3)
\EEQ

Now by Proposition  \ref{kakade_prop} and the choice of $\lambda^l$,
we have that for any $k\notin I^+$,
\[
F^\star-F(\pi_{\theta^{l,k}})\leq \|d_\rho^{\pi^\star}/\rho\|_\infty\epsilon.
\]
Since for any $\pi\in\Pi$, $F(\pi)\in[0,1/(1-\gamma)]$, we have
$F^\star-F(\pi)\leq 1/(1-\gamma)$. Hence by \eqref{prop_bd}, we have
\[
\begin{split}
\sum\nolimits_{k\leq K}F^\star&-F(\pi_{\theta^{l,k}})\\
&=\sum_{k\in I^+}F^\star-F(\pi_{\theta^{l,k}})+\sum_{k\notin I^+}F^\star-
F(\pi_{\theta^{l,k}})\\
&\leq |I^+|/(1-\gamma)+\left(K+1-|I^+|\right)\|d_\rho^{\pi^\star}/\rho\|_\infty\epsilon\\
&\leq \dfrac{4(D_{l}+X_{l,T_l})}{(1-\gamma)C_2E_l
\epsilon^2}\sqrt{K+1}\log_2(K+3)+(K+1)\|d_\rho^{\pi^\star}/\rho\|_\infty\epsilon.
\end{split}
\]

This immediately implies that
\BEQ\label{regret_l_K_tmpbd}
\begin{split}
&\mathbf{regret}_l(K)=\sum\nolimits_{k\leq K}F^\star-F(\pi_{\theta^{l,k}})
+\sum\nolimits_{k\leq K}F(\pi_{\theta^{l,k}})-\hat{F}^{l,k}(\pi_{\theta^{l,k}})\\
&\leq \sum\nolimits_{k\leq K}F^\star-F(\pi_{\theta^{l,k}})+
\sum\nolimits_{k\leq K}\Expect_{l,k}
\sum\nolimits_{t=H^{l,k}+1}^{\infty}\gamma^tr(s_t^{l,k},a_t^{l,k})\\
&\leq  \sum\nolimits_{k\leq K}F^\star-F(\pi_{\theta^{l,k}})
+\sum_{k\leq K}\frac{\gamma/(k+1)}{1-\gamma}\\
&\leq  \dfrac{4(D_{l}+X_{l,T_l})}{(1-\gamma)C_2E_l
\epsilon^2}\sqrt{K+1}\log_2(K+3)+(K+1)\left\|\dfrac{d_\rho^{\pi^\star}}{\rho}\right\|_\infty\epsilon
+\frac{\gamma+\gamma\log(K+1)}{1-\gamma}.
\end{split}
\EEQ

Now if $|I^+|=0$, then we immediately have that
\[
\mathbf{regret}_l(K)\leq (K+1)\left\|\dfrac{d_\rho^{\pi^\star}}{\rho}\right\|_\infty\epsilon+\frac{\gamma+\gamma\log(K+1)}{1-\gamma},
\]
and hence \eqref{regret_l_K_tmpbd} always holds.

Finally, by \eqref{X_l_inf_bd}, we have that with probability at least
$1-\delta$, for all $K\in\{0,\dots,T_l-1\}$,
\[
\begin{split}
&\mathbf{regret}_l(K)\\
&\leq \dfrac{4(D_{l}+
\sqrt{2C_{l}\log(2/\delta)})}{(1-\gamma)C_2E_l\epsilon^2}
\sqrt{K+1}\log_2(K+3)+\epsilon (K+1)\left\|\dfrac{d_\rho^{\pi^\star}}{\rho}\right\|_\infty+\frac{\gamma+\gamma\log(K+1)}{1-\gamma}.
\end{split}
\]
This completes our proof.
\end{proof}

\subsection{Overall regret bound for general policy gradient estimators}
\label{overall_regret_general}
In this section, we state and prove the overall regret bound for general policy gradient estimators,
which generalizes Theorem \ref{regret_reinforce} for REINFORCE gradient estimators. 
\begin{theorem}[General regret bound]
\label{overall_regret}
Under Assumptions \ref{softmax-reg} and \ref{unbiased-boundedvar}, suppose
that for each $l\geq 0$, we choose
$\alpha^{l,k}=C_{l,\alpha}\frac{1}{\sqrt{k+3}\log_2(k+3)}$ for some
 $C_{l,\alpha}\in[\underline{C}^{\alpha},C_2/(M_2\beta_{\lambda^l})]$,
with $\underline{C}^\alpha\in(0,C_2/(M_2\beta_{\bar{\lambda}})]$ and
$\bar{\lambda}=\frac{1-\gamma}{2}$. In addition,
suppose that we specify $T_0\geq 1$, choose  $T_l=2^lT_0$,
$\epsilon^l=T_l^{-1/6}$ and
$\lambda^l=\frac{\epsilon^l(1-\gamma)}{2}$
for each $l\geq 0$. Then we have
 for any $\delta\in(0,1)$,
with probability at least $1-\delta$, for any $N\geq 0$, we have
\BEQ\label{regret_all_formal}
\begin{split}
\mathbf{regret}(N)\leq \bar{R}_1(N)+\bar{R}_2(N)
=O(N^{5/6}(\log(N/\delta))^{5/2}),
\end{split}
\EEQ
where 
\BEQ\label{regret_all_formal_details}
\begin{split}
\bar{R}_1(N)=&\, \left(\dfrac{4(\bar{D}+\sqrt{2\bar{C}
((\log_2 (N+1)+2)\log2+\log(1/\delta))})}{(1-\gamma)C_2\underline{E}}+\left\|\dfrac{d_\rho^{\pi^\star}}{\rho}\right\|_\infty\right)\\
&\,\times(N+T_0)^{\frac{5}{6}}(\log_2(2N+2T_0+1))^2,\\
\bar{R}_2(N)=&\, \frac{(\log_2 (N+1)+1)(\gamma+\gamma\log (N+T_0))}{1-\gamma}.
\end{split}
\EEQ
Here
the constants  $\underline{E}=\frac{\underline{C}^{\alpha}(1-\gamma)^2}
{16S^2A^2}$, 
\[
\bar{D}=C\bar{C}_{\alpha}^2+
\beta_{\bar{\lambda}}M_1\bar{C}_{\alpha}^2
+\frac{1}{1-\gamma}+\log(1/\epsilon_{\rm pp}),\]
\[
\bar{C}=32C_1^2\bar{C}_{\alpha}^2\left(\frac{1}{(1-\gamma)^2}
+\bar{\lambda}\right)^2+
\frac{\beta_{\bar{\lambda}}^2C_1^4\bar{C}_{\alpha}^4}{2},
\]
with $\bar{C}_{\alpha}=\frac{C_2(1-\gamma)^3}{8M_2}$.

In addition, we also have
\[
\lim_{N\rightarrow\infty}\mathbf{regret}(N)/(N+1)=0\quad \text{almost surely}
\]
with an asymptotic rate of $O(N^{-1/6}(\log N)^{5/2})$. 
\end{theorem}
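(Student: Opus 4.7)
The plan is to stitch together the per-phase bounds of Theorem \ref{phase_regret} via the decomposition \eqref{regret_def_connection}, apply a union bound across phases for a uniform high-probability guarantee, and then invoke Lemma \ref{highprob2as} for the almost-sure statement. First, I would verify that Theorem \ref{phase_regret} applies with uniform constants across phases. Since $\epsilon^l\leq 1$ for $T_0\geq 1$ we have $\lambda^l\leq\bar\lambda=(1-\gamma)/2$, so by Proposition \ref{Lsmooth} $\beta_{\lambda^l}\leq\beta_{\bar\lambda}$ and the admissible step-size interval $[\underline C^\alpha, C_2/(M_2\beta_{\lambda^l})]$ is nonempty for every $l$. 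The phase-dependent constants $C_l,D_l,E_l$ appearing in Theorem \ref{phase_regret_formal} can then be replaced by the uniform $\bar C,\bar D,\underline E$: the only subtle point is bounding $F^\star-L_{\lambda^l}(\theta^{l,0})$ uniformly, which follows because the post-processing step enforces $\pi_{\theta^{l,0}}(s,a)\geq\epsilon_{\rm pp}$, whence $R(\theta^{l,0})\geq\log\epsilon_{\rm pp}$ and $F^\star-L_{\lambda^l}(\theta^{l,0})\leq\tfrac{1}{1-\gamma}+\lambda^l\log(1/\epsilon_{\rm pp})\leq\tfrac{1}{1-\gamma}+\log(1/\epsilon_{\rm pp})$.

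Second, I would apply Theorem \ref{phase_regret} to phase $l$ with tolerance $\epsilon^l=T_l^{-1/6}$ and failure probability $\delta_l=\delta/2^{l+1}$, so that a union bound gives total failure probability at most $\sum_{l\geq 0}\delta_l\leq\delta$. The calibration $\epsilon^l=T_l^{-1/6}$ is chosen precisely to balance the two leading phase-regret terms: $\sqrt{T_l}/(\epsilon^l)^2=T_l^{5/6}$ (from the gradient-control term) matches $T_l\epsilon^l=T_l^{5/6}$ (from the bias term). Hence the phase-$l$ contribution is at most a constant times $T_l^{5/6}\log_2(T_l+2)\sqrt{\log(1/\delta_l)}$ plus a benign $\tfrac{\gamma(1+\log T_l)}{1-\gamma}$ truncation term. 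Because $T_l=2^lT_0$ grows geometrically, $\sum_{l=0}^{l_N}T_l^{5/6}=O((N+T_0)^{5/6})$ and $l_N=O(\log_2(N/T_0))$; the partial last-phase regret $\mathbf{regret}_{l_N}(k_N)$ is absorbed by simply applying Theorem \ref{phase_regret} with $K=k_N\leq T_{l_N}-1$ in that phase. Using $\log(1/\delta_l)\leq (l+1)\log 2+\log(1/\delta)=O(\log_2(N+1)+\log(1/\delta))$ and the $(\log_2(T_l+2))=O(l)$ factors, the polylogarithmic prefactor aggregates to $(\log(N/\delta))^{5/2}$, giving \eqref{regret_all_formal} with the stated $\bar R_1,\bar R_2$.

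Finally, the almost-sure conclusion is immediate from Lemma \ref{highprob2as}: the high-probability bound is of the form $d_1(N+T_0)^{5/6}(\log(N/\delta))^{5/2}+d_4(\log N)^2$ with $d_2=5/6\in[0,1)$, so the lemma yields $\mathbf{regret}(N)/(N+1)\to 0$ a.s.\ at rate $O(N^{-1/6}(\log N)^{5/2})$. I expect the main obstacle to be bookkeeping rather than conceptual. Two items require care: (i) the Azuma--Hoeffding-based martingale concentration underlying Theorem \ref{phase_regret} is performed \emph{within} each phase using $\sum_k c_{l,k}^2\leq\bar C$, and one must check that the step-size shrinkage and the uniform boundedness of $C_{l,\alpha}\leq\bar C_\alpha$ keep $\bar C$ finite and independent of $l$; and (ii) the exponents on the two $\log$ factors must combine in just the right way — the $\sqrt{\log(1/\delta_l)}$ from per-phase concentration, the $\log_2(T_l+2)$ from the step-size decay, and the number $O(\log N)$ of phases — so that the aggregate exponent on $\log(N/\delta)$ is exactly $5/2$ and no cross-phase dependence inflates the leading $N^{5/6}$ power.
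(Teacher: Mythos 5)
Your proposal is correct and follows essentially the same route as the paper's proof: uniformize the phase constants via the post-processing bound $R(\theta^{l,0})\geq\log\epsilon_{\rm pp}$ and $\lambda^l\leq\bar\lambda$, apply Theorem \ref{phase_regret} per phase with failure probability $\delta/2^{l+1}$, combine via the doubling decomposition and $l_N\leq\log_2(N+1)$, and finish with Lemma \ref{highprob2as}. The only (immaterial) difference is that you sum the geometric series $\sum_l T_l^{5/6}$ directly, whereas the paper uses the slightly cruder bound $(l_N+1)\max_l\mathbf{regret}_l(T_l-1)$, which costs one extra logarithmic factor already absorbed into the stated $(\log(N/\delta))^{5/2}$.
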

\begin{remark}
Notice that the
constant $\underline{E}$ is a uniform lower bound of $E_l$ ($l\geq 0$), while
the constants $\bar{D}$ and $\bar{C}$ are
uniform upper bounds of $D_{l}$ and $C_{l}$ ($l\geq 0$), respectively. Here 
the constants $E_l,\,D_l,\,C_l$ are those defined in Theorem \ref{phase_regret_formal}.
\end{remark}
\begin{remark}
In the 
big-$O$ notation above, we have (temporarily) hidden the problem dependent 
quantities, which will be made explicit when we specialize the results to
the REINFORCE gradient estimation below. 
\end{remark}
\begin{proof}
We first prove the high probability result. 
By \eqref{regret_phase_l_formal} and the choices of $\epsilon^l$ and $\lambda^l$,
we have that for any phase $l\geq 0$, with probability at least
$1-\delta/2^{l+1}$, for all $K\in\{0,\dots,T_l-1\}$,
\[
\begin{split}
&\mathbf{regret}_l(K)\\
&\leq \left(\dfrac{4(\bar{D}+
\sqrt{2\bar{C}((l+2)\log2+\log(1/\delta))})}{(1-\gamma)
C_2\underline{E}}+\left\|\dfrac{d_\rho^{\pi^\star}}{\rho}\right\|_\infty\right)T_l^{5/6}\log_2(T_l+2)+
\frac{\gamma+\gamma\log T_l}{1-\gamma}.
\end{split}
\]
where $\underline{E}=\frac{\underline{C}^{\alpha}(1-\gamma)^2}
{16S^2A^2}$,
\[
\bar{D}=C\bar{C}_{\alpha}^2+
\beta_{\bar{\lambda}}M_1\bar{C}_{\alpha}^2
+\frac{1}{1-\gamma}+\log(1/\epsilon_{\rm pp}),\]
\[
\bar{C}=32C_1^2\bar{C}_{\alpha}^2\left(\frac{1}{(1-\gamma)^2}
+\bar{\lambda}\right)^2+
\frac{\beta_{\bar{\lambda}}^2C_1^4\bar{C}_{\alpha}^4}{2},
\]
with $\bar{C}_{\alpha}=\frac{C_2(1-\gamma)^3}{8M_2}$ and
$\bar{\lambda}=\frac{1-\gamma}{2}$.
Here we used the fact that $\epsilon^l\leq 1$, which then implies that
$\lambda^l\leq \bar{\lambda}$ and
\[
\frac{8}{(1-\gamma)^3}\leq
\beta_{\lambda^l}\leq \beta_{\bar{\lambda}}=\frac{8}{(1-\gamma)^3}+
\frac{2\bar{\lambda}}{S}.
\]
We also used the fact that $F^\star-F(\pi)
\leq 1/(1-\gamma)$ for any $\pi\in\Pi$ and that by the definition of
\texttt{PostProcess}, $R_{\lambda^l}(\pi_{\theta^{l,0}})\geq \log
\epsilon_{\rm pp}$.

Now recall that for any $N\geq0$, we have 
\[
\begin{split}
\mathbf{regret}(N) &= \sum_{l=0}^{l_N-1}\mathbf{regret}_l(T_l-1)+
\mathbf{regret}_{l_N}(k_N) \\
&\leq \sum_{l=0}^{l_N}\mathbf{regret}_l(T_l-1),
\end{split}
\]
where $(l_N,k_N)=G_{\mathcal{T}}(N)$. In addition, by the choices of $T_l$, we have
that for any $0\leq k\leq T_l-1$,
\begin{align*}
B_{\mathcal{T}}(l,k)&=\sum_{j=0}^{l-1}T_j+k\\
&=(2^l-1)T_0+k\\
&\geq (2^l-1)T_0.
\end{align*} Hence for any $N\geq 0$, we have $l_N\leq \log_2
\left(\frac{N}{T_0}+1\right)\leq \log_2(N+1)$.

Thus we have that  
with probability at least
$1-\sum_{l=0}^{l_N}\delta/2^{l+1}\geq 1-\delta$, for any $N\geq 0$,
\[
\begin{split}
&\mathbf{regret}(N)\leq (l_N+1)(\hat{R}_1(N)+\hat{R}_2(N)),
\end{split}
\]
where 
\[
\begin{split}
\hat{R}_1(N)&=\left(\dfrac{4(\bar{D}+
\sqrt{2\bar{C}((l_{N}+2)\log2+\log(1/\delta))})}
{(1-\gamma)C_2\underline{E}}+\left\|\dfrac{d_\rho^{\pi^\star}}{\rho}\right\|_\infty\right)(N+T_0)^{\frac{5}{6}}\log_2(N+T_0+2),\\
\hat{R}_2(N)&=\frac{\gamma+\gamma\log (N+T_0)}{1-\gamma}.
\end{split}
\]
Finally, noticing that $l_N+1\leq\log_2(N+1)+1\leq \log_2(2N+2T_0+1)$, we have
\[
\begin{split}
(l_N+1)\hat{R}_1(N)\leq&\, \left(\dfrac{4(\bar{D}+\sqrt{2\bar{C}
((\log_2 (N+1)+2)\log2+\log(1/\delta))})}{(1-\gamma)C_2\underline{E}}+\left\|\dfrac{d_\rho^{\pi^\star}}{\rho}\right\|_\infty\right)\\
&\,\times(N+T_0)^{\frac{5}{6}}(\log_2(2N+2T_0+1))^2,\\
(l_N+1)\hat{R}_2(N)\leq&\, \frac{(\log_2 (N+1)+1)(\gamma+\gamma\log (N+T_0))}{1-\gamma},
\end{split}
\]
which immediately imply \eqref{regret_all_formal} and \eqref{regret_all_formal_details}. 
Notice that here we used the fact that $\log_2(N+1)+1\leq \log_2(2N+2T_0+1)$ (since
$T_0\geq 1$), and that
 $T_l\leq N+1\leq N+T_0$ for all
$l=0,\dots,l_{N}-1$ and $T_{l_N}=2^{l_N}T_0\leq N+T_0$. 

Finally, by invoking Lemma \ref{highprob2as}, we immediately 
obtain the almost sure convergence result. This completes our proof.
\end{proof}

\section{Formal statement of REINFORCE regret bound}
\label{formal_statement}
Here we provide a slightly more formal restatement of Theorem \ref{regret_reinforce}, 
with details about the constants in the big-$O$ notation in the main text. 
Recall that our goal there is specialize
(and slightly simplify) the regret bound in Theorem \ref{overall_regret} to the
case when the
REINFORCE gradient estimation in \S\ref{reinforce_grad_est} is adopted
to evaluate $\widehat{\nabla}_{\theta} L_{\lambda^l}(\theta^{l,k})$. In
particular, 
we have the following corollary. 
The proof is done by simply combining Lemma
\ref{verify_basic_reinforce_base} (with $\lambda^l\leq \bar{\lambda}$ by their 
definitions in Theorem \ref{regret_reinforce} or Corollary 
\ref{regret_reinforce_formal} below) and Theorem \ref{overall_regret},
together with the
specific choices of the hyper-parameters as well as the constants in Lemma
\ref{verify_basic_reinforce_base}
plugged in and some elementary algebraic
simplifications,
and is hence omitted.
\begin{corollary}[Formal statement of Theorem \ref{regret_reinforce}]
\label{regret_reinforce_formal}
Under Assumption \ref{softmax-reg},
suppose that for each $l\geq 0$, we choose
$\alpha^{l,k}=C_{l,\alpha}\frac{1}{\sqrt{k+3}\log_2(k+3)}$, with
$C_{l,\alpha}\in[\underline{C}^\alpha, 1/(2\beta_{\lambda^l})]$,
$\underline{C}^\alpha\in(0,1/(2\beta_{\bar{\lambda}})]$ and
$\bar{\lambda}=\frac{1-\gamma}{2}$, and choose
$T_l=2^l$, $\epsilon^l=T_l^{-1/6}=2^{-l/6}$,
$\lambda^l=\frac{\epsilon^l(1-\gamma)}
{2}$ and $\epsilon_{\rm pp}=1/(2A)$.
In addition, suppose that \eqref{reinforce_grad_base} is adopted to
evaluate $\widehat{\nabla}_{\theta} L_{\lambda^l}(\theta^{l,k})$,
with $\beta\in(0,1)$, $|b(s)|\leq B$ for any $s\in\mathcal{S}$ (where $B>0$ is a
constant), and
that \eqref{Hlk_bound} holds for $H^{l,k}$ for all $l,\,k\geq 0$. Then we have 
for any $\delta\in(0,1)$, with probability at least $1-\delta$, for any $N\geq 0$,
we have
\[
\begin{split}
\mathbf{regret}(N)\leq \tilde{R}_1(N)+\tilde{R}_2(N),
\end{split}
\]
where 
\[
\begin{split}
\tilde{R}_1(N)=&\,\left(\dfrac{4(\tilde{D}+\sqrt{2\tilde{C}
((\log_2(N+1)+2)\log2+\log(1/\delta))})}{(1-\gamma)\underline{E}}+\left\|\dfrac{d_\rho^{\pi^\star}}{\rho}\right\|_\infty\right)\\
&\times
(N+1)^{\frac{5}{6}}(\log_2 (2N+3))^2,\\
\tilde{R}_2(N)=&\,\frac{\gamma(\log_2(N+1)+1)^2}{1-\gamma}.
\end{split}
\]
Here the constants are $\underline{E}=\frac{\underline{C}^{\alpha}(1-\gamma)^2}{16S^2A^2}$, and 
\[
\begin{split}
\tilde{D}&=(1-\gamma)^6\left(\frac{1}{(1-\gamma)^2}+
\bar{\lambda}\right)^2+\frac{1}{256}(1-\gamma)^6
\beta_{\bar{\lambda}}\left(\frac{32}{(1-\gamma)^4}+\bar{V}_b\right)
+\frac{1}{1-\gamma}+\log(2A),\\
\tilde{C}&=
\frac{\beta_{\bar{\lambda}}^2(1-\gamma)^{12}\left(\frac{(1+B(1-\gamma))}
{(1-\gamma)^2}+\bar{\lambda}\right)^4}{8192}+
\frac{1}{2}(1-\gamma)^6\left(\frac{(1+B(1-\gamma))}{(1-\gamma)^2}+
\bar{\lambda}\right)^4.
\end{split}
\] Here $\bar{V}_b$ is the variance bound defined in Lemma
\ref{verify_basic_reinforce_base}.

Suppose in addition that we specify $\underline{C}^{\alpha}=1/(2\beta_{\bar{\lambda}})$,
then we can simplify the regret bound into the following simple form:
\[
\mathbf{regret}(N)=O\left( \left(\frac{S^2A^2}{(1-\gamma)^{7}}
+\left\|\dfrac{d_{\rho}^{\pi^\star}}{\rho}\right\|_{\infty}\right)
N^{\frac{5}{6}}(\log(N/\delta))^{\frac{5}{2}}\right).
\]
In addition, we also have 
\[
\lim_{N\rightarrow\infty}\mathbf{regret}(N)/(N+1)=0\quad \text{almost surely}
\]
with an asymptotic rate of $O\left(\left(\frac{S^2A^2}{(1-\gamma)^{7}}
+\left\|d_{\rho}^{\pi^\star}/\rho\right\|_{\infty}\right)
N^{-\frac{1}{6}}(\log N)^{\frac{5}{2}}\right)$.
\end{corollary}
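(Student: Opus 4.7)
The plan is to reduce Corollary \ref{regret_reinforce_formal} to a direct application of Theorem \ref{overall_regret}, after verifying Assumption \ref{unbiased-boundedvar} for the REINFORCE gradient estimator \eqref{reinforce_grad_base}. First I would check that the stated regularization schedule $\lambda^l=\epsilon^l(1-\gamma)/2=2^{-l/6}(1-\gamma)/2$ satisfies $\lambda^l\le(1-\gamma)/2=\bar\lambda$ for every $l\ge 0$, so that the uniform bound $\lambda^l\le\bar\lambda$ required by Lemma \ref{verify_basic_reinforce_base} is in force. The trajectory length lower bound \eqref{Hlk_bound} is assumed by hypothesis, and Assumption \ref{softmax-reg} is assumed. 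Lemma \ref{verify_basic_reinforce_base} then supplies explicit constants $C,\,C_1,\,C_2=1,\,M_1,\,M_2=2$ and $\delta_{l,k}\le(2/(1-\gamma)^2+2\bar\lambda)(k+1)^{-2/3}$ with $\sum_k\delta_{l,k}^2\le C$, which is exactly Assumption \ref{unbiased-boundedvar}.

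Second, since $C_2=1$ and $M_2=2$, the admissible range $C_{l,\alpha}\in[\underline{C}^\alpha,C_2/(M_2\beta_{\lambda^l})]=[\underline{C}^\alpha,1/(2\beta_{\lambda^l})]$ demanded by Theorem \ref{overall_regret} coincides with the interval specified in the corollary, and the admissibility constraint $\underline{C}^\alpha\in(0,1/(2\beta_{\bar\lambda})]$ is the same. Taking $T_0=1$ (so that $T_l=2^l$) and $\epsilon^l=T_l^{-1/6}$, Theorem \ref{overall_regret} directly yields the high-probability bound \eqref{regret_all_formal} with $\bar C,\bar D$ expressed in the abstract constants. The explicit constants $\tilde C$ and $\tilde D$ in the corollary are then obtained by substituting the REINFORCE values from Lemma \ref{verify_basic_reinforce_base} into the formulas for $\bar C,\bar D$, using $\bar C_\alpha=C_2(1-\gamma)^3/(8M_2)=(1-\gamma)^3/16$ (valid because $\beta_{\bar\lambda}\ge 8/(1-\gamma)^3$) and $\beta_{\bar\lambda}=8/(1-\gamma)^3+(1-\gamma)/S$. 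This step is mechanical algebraic substitution.

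Third, for the simplified big-$O$ statement I would set $\underline{C}^\alpha=1/(2\beta_{\bar\lambda})=\Theta((1-\gamma)^3)$, which gives $\underline E=\underline{C}^\alpha(1-\gamma)^2/(16S^2A^2)=\Theta((1-\gamma)^5/(S^2A^2))$ and thus $1/((1-\gamma)\underline E)=\Theta(S^2A^2/(1-\gamma)^6)$. A short calculation shows that both terms of $\tilde C$ are $\Theta((1-\gamma)^{-2})$ (the quartic factor $((1-\gamma)^{-2}+\bar\lambda)^4=\Theta((1-\gamma)^{-8})$ is damped by $(1-\gamma)^{12}\beta_{\bar\lambda}^2=\Theta((1-\gamma)^6)$ in the first term and by $(1-\gamma)^6$ in the second), so $\sqrt{\tilde C}=\Theta(1/(1-\gamma))$. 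Likewise $\tilde D=O(1/(1-\gamma)+\log A)$. Multiplying through, the dominant coefficient of $\tilde R_1$ is $\Theta(S^2A^2/(1-\gamma)^7)+\|d_\rho^{\pi^\star}/\rho\|_\infty$. Collecting $\sqrt{\log(1/\delta)}$ together with $(\log_2(2N+3))^2$ produces the $(\log(N/\delta))^{5/2}$ factor, and $\tilde R_2(N)=O((\log N)^2/(1-\gamma))$ is absorbed into the same big-$O$ as a strictly lower-order term.

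Fourth, the almost sure convergence assertion follows from one invocation of Lemma \ref{highprob2as} applied to the high-probability bound just established, with $d_2=5/6$, $d_3=5/2$ (and $d_4,d_5$ chosen to absorb $\tilde R_2$), giving the asymptotic rate $O((S^2A^2/(1-\gamma)^7+\|d_\rho^{\pi^\star}/\rho\|_\infty)N^{-1/6}(\log N)^{5/2})$. The main obstacle is essentially bookkeeping: carefully tracking the powers of $(1-\gamma)$ as one substitutes the REINFORCE constants into $\bar C$ and $\bar D$ to confirm that no hidden negative power of $(1-\gamma)$ inflates the exponent beyond $7$ and that the $S,A$ dependence really collapses to $S^2A^2$. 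Once this accounting is in place, both the high-probability and almost-sure conclusions of the corollary follow without further work.
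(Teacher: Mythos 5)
Your proposal is correct and follows exactly the route the paper takes: the paper's (omitted) proof is described as combining Lemma \ref{verify_basic_reinforce_base} (after checking $\lambda^l\le\bar\lambda$) with Theorem \ref{overall_regret} at $T_0=1$, substituting the REINFORCE constants into $\bar C,\bar D$, and invoking Lemma \ref{highprob2as} for the almost-sure statement. Your bookkeeping of the powers of $(1-\gamma)$ and the $S^2A^2$ dependence matches the paper's simplification.
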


\begin{remark}
Notice that here (and below),
with the specific choices of algorithm hyper-parameters and gradient estimators
we are finally able to
make all the problem dependent quantities (\eg, $\gamma,\,S,\,A$, etc.) explicit in the 
big-$O$ notation, which is consistent with the convention of the reinforcement learning 
literature.  
Here the only hidden quantities are some absolute constants. 
\end{remark}

\section{Mini-batch phased policy gradient method}\label{minibatch_PG_phased}
Here we formalize the mini-batch version of Algorithm \ref{PG_phased}
described at the beginning of \S \ref{mini-batch-reinforce} as Algorithm \ref{PG_phased_batch},
and provide a formal statement as well as a proof for Corollary \ref{regret_reinforce_batch}.

Firstly, we have the following lemma, which transfers guarantees on
$\widehat{\nabla}_{\theta}^{(i)}L_{\lambda^l}(\theta^{l,k})$ ($i=1,\dots,M$) to the
averaged gradient estimation 
$\frac{1}{M}\sum_{i=1}^M\widehat{\nabla}_{\theta}^{(i)}
L_{\lambda^{l}}({\theta^{l,k}})$. The proof follows directly from the fact that
the variance of the sum of independent random variables is the sum of the
variances, and is thus omitted.
\begin{lemma}\label{mini-batch-assumption2}
Suppose that 
each
$\widehat{\nabla}_{\theta}^{(i)}L_{\lambda^l}(\theta^{l,k})$ ($i=1,\dots,M$) is
computed using \eqref{reinforce_grad_base} with the corresponding trajectory, and
that the same assumptions as in Lemma \ref{verify_basic_reinforce_base} hold.
Then
Assumption \ref{unbiased-boundedvar} also holds for
$\widehat{\nabla}_{\theta}L_{\lambda^l}(\theta^{l,k})
=\frac{1}{M}\sum_{i=1}^M\widehat{\nabla}_{\theta}^{(i)}
L_{\lambda^{l}}({\theta^{l,k}})$ with the same constants
$C,C_1,C_2,M_2,\delta_{l,k}$ and $\bar{V}_b$ as in
Lemma \ref{verify_basic_reinforce_base}, while
$M_1=\frac{32}{(1-\gamma)^4}+\frac{\bar{V}_b}{M}$.
\end{lemma}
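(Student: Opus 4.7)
The plan is to verify each of the three components of Assumption \ref{unbiased-boundedvar} for the mini-batch estimator $\widehat{\nabla}_{\theta} L_{\lambda^l}(\theta^{l,k}) = \tfrac{1}{M}\sum_{i=1}^M \widehat{\nabla}_{\theta}^{(i)} L_{\lambda^l}(\theta^{l,k})$ by directly importing the corresponding bounds for the single-trajectory estimator from Lemma \ref{verify_basic_reinforce_base}, while exploiting the conditional independence of the $M$ trajectories (given $\theta^{l,k}$) to sharpen the second-moment bound. The almost sure bound is the easiest: by the triangle inequality and Lemma \ref{verify_basic_reinforce_base}, $\|\widehat{\nabla}_{\theta} L_{\lambda^l}(\theta^{l,k})\|_2 \leq \tfrac{1}{M}\sum_{i=1}^M \|\widehat{\nabla}_{\theta}^{(i)} L_{\lambda^l}(\theta^{l,k})\|_2 \leq C_1$ almost surely, so $C_1$ is unchanged.

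For the near-unbiasedness condition, the key observation is that all $M$ trajectories are sampled from the same policy $\pi_{\theta^{l,k}}$ conditional on $\theta^{l,k}$, so each $\widehat{\nabla}_{\theta}^{(i)} L_{\lambda^l}(\theta^{l,k})$ has the same conditional mean, and thus $\Expect_{l,k}\widehat{\nabla}_{\theta} L_{\lambda^l}(\theta^{l,k}) = \Expect_{l,k}\widehat{\nabla}_{\theta}^{(1)} L_{\lambda^l}(\theta^{l,k})$. Applying the single-trajectory inequality immediately yields the same $C_2=1$ and the same $\delta_{l,k}$, and hence the same summability constant $C$.

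The main (and only genuinely non-trivial) step is the second-moment bound, where we want to see the $1/M$ variance reduction. I would use the identity $\Expect_{l,k}\|X\|_2^2 = \|\Expect_{l,k}X\|_2^2 + \V_{l,k}X$ applied to $X = \widehat{\nabla}_{\theta} L_{\lambda^l}(\theta^{l,k})$. Since the $M$ trajectories are conditionally independent given $\theta^{l,k}$, we have
\[
\V_{l,k}\widehat{\nabla}_{\theta} L_{\lambda^l}(\theta^{l,k}) = \tfrac{1}{M^2}\sum_{i=1}^M \V_{l,k}\widehat{\nabla}_{\theta}^{(i)} L_{\lambda^l}(\theta^{l,k}) \leq \tfrac{\bar{V}_b}{M},
\]
using the definition of $\bar{V}_b$ from Lemma \ref{verify_basic_reinforce_base}. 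For the mean-squared term, I would reuse the $J_1+J_2+J_3$ decomposition from the single-trajectory proof, where $J_1 = \nabla_\theta L_{\lambda^l}(\theta^{l,k})$ and $\|J_2\|_2 + \|J_3\|_2 \leq 4/(1-\gamma)^2$ (from the truncation analysis), to conclude $\|\Expect_{l,k}\widehat{\nabla}_{\theta} L_{\lambda^l}(\theta^{l,k})\|_2^2 \leq 2\|\nabla_\theta L_{\lambda^l}(\theta^{l,k})\|_2^2 + 32/(1-\gamma)^4$. Adding the variance contribution gives exactly $M_1 = 32/(1-\gamma)^4 + \bar{V}_b/M$ with $M_2=2$, as claimed.

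I do not expect any real obstacle here; essentially all of the analytic work was already done in Lemma \ref{verify_basic_reinforce_base}, and the lemma is a bookkeeping exercise isolating the single quantity (the conditional variance) that benefits from independent averaging. The only mild care needed is to explicitly invoke the conditional independence of the $M$ trajectories when passing the variance through the sum — an assumption that is standard in this mini-batch setting and is compatible with the conditional-independence convention stated earlier in the paper.
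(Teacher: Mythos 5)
Your proposal is correct and is precisely the argument the paper intends: the paper omits the proof of this lemma, stating only that it ``follows directly from the fact that the variance of the sum of independent random variables is the sum of the variances,'' and your write-up fills in exactly that bookkeeping — unchanged $C_1$, $C_2$, $\delta_{l,k}$, $C$ via the triangle inequality and the shared conditional mean, and the $\bar{V}_b/M$ improvement in $M_1$ via conditional independence together with the $J_1+J_2+J_3$ decomposition from Lemma \ref{verify_basic_reinforce_base}. No gaps.
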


Now we are ready to state and prove (a more formal version of) 
Corollary \ref{regret_reinforce_batch}.
\begin{corollary}[Formal statement of Corollary \ref{regret_reinforce_batch}]
\label{regret_reinforce_batch_formal}
Under Assumption \ref{softmax-reg},
suppose that for each $l\geq 0$, we choose
$\alpha^{l,k}=C_{l,\alpha}\frac{1}{\sqrt{k+3}\log_2(k+3)}$, with
$C_{l,\alpha}\in[\underline{C}^\alpha, 1/(2\beta_{\lambda^l})]$,
$\underline{C}^\alpha\in(0,1/(2\beta_{\bar{\lambda}})]$ and
$\bar{\lambda}=\frac{1-\gamma}{2}$, and choose
$T_l=2^l$, $\epsilon^l=T_l^{-1/6}=2^{-l/6}$,
$\lambda^l=\frac{\epsilon^l(1-\gamma)}
{2}$ and $\epsilon_{\rm pp}=1/(2A)$. 
In addition, suppose that the assumptions in Lemma \ref{mini-batch-assumption2} hold
(note that Assumption \ref{softmax-reg} and $\lambda^l\leq \bar{\lambda}$
already automatically hold by the other assumptions).
Then we have
 for any $\delta\in(0,1)$, with probability at least $1-\delta$, 
for any $N\geq M\geq 1$,
we have  that
(for the mini-batch Algorithm \ref{PG_phased_batch})
\[
\begin{split}
\mathbf{regret}(N-M;M)\leq \tilde{R}_1(N;M)+\tilde{R}_2(N;M),
\end{split}
\]
where 
\[
\begin{split}
\tilde{R}_1(N;M)=&\left(\dfrac{4(\tilde{D}_M+\sqrt{2\tilde{C}
((\log_2(N/M)+2)\log2+\log(1/\delta))})}{(1-\gamma)\underline{E}}+\left\|\dfrac{d_\rho^{\pi^\star}}{\rho}\right\|_\infty\right)\\
&\,\times M^{\frac{1}{6}}N^{\frac{5}{6}}(\log_2 (2(N/M)+1))^2,\\
\tilde{R}_2(N;M)=&\frac{\gamma M(\log_2(N/M)+1)^2}{1-\gamma}.
\end{split}
\]
Here the constants $\underline{E}$ and $\tilde{C}$  are the same as in
Corollary \ref{regret_reinforce_formal}, while
\[
\tilde{D}_M=(1-\gamma)^6\left(\frac{1}{(1-\gamma)^2}+
\bar{\lambda}\right)^2+\frac{1}{256}(1-\gamma)^6
\beta_{\bar{\lambda}}\left(\frac{32}{(1-\gamma)^4}+\frac{\bar{V}_b}{M}\right)
+\frac{1}{1-\gamma}+\log(2A).
\]
Here $\bar{V}_b$ is the variance bound defined in Lemma
\ref{verify_basic_reinforce_base}.

Suppose in addition that we specify $\underline{C}^{\alpha}=1/(2\beta_{\bar{\lambda}})$. 
Then we can simplify the 
regret bound into the following simple form:
 \[
 \begin{split}
& \mathbf{regret}(N;M)\\
&=O\left(\left(\dfrac{S^2A^2}{(1-\gamma)^7}+\left\|\dfrac{d_{\rho}^{\pi^\star}}{\rho}\right\|_{\infty}\right)
(M^{\frac{1}{6}}+M^{-\frac{5}{6}})(N+M)^{\frac{5}{6}}(\log(N/\delta))^{\frac{5}{2}}
+\dfrac{M(\log N)^2}{1-\gamma}\right).
\end{split}
\]
 In addition, we also have 
 \[
\lim_{N\rightarrow\infty}\mathbf{regret}(N;M)/(N+1)=0 \quad \text{almost surely}
\]
with an asymptotic rate of 
\[
O\left(\left(\dfrac{S^2A^2}{(1-\gamma)^7}+\left\|\dfrac{d_{\rho}^{\pi^\star}}{\rho}\right\|_{\infty}\right)
(M^{\frac{1}{6}}+M^{-\frac{5}{6}})N^{-\frac{1}{6}}\left(1+\dfrac{M}{N}\right)^{\frac{5}{6}}(\log N)^{\frac{5}{2}}
+\dfrac{M(\log N)^2}{(1-\gamma)N}\right).
\]
\end{corollary}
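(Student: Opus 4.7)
The plan is to reduce the mini-batch setting to the single-trajectory setting by viewing Algorithm \ref{PG_phased_batch} as an instance of Algorithm \ref{PG_phased} whose gradient estimator is the mini-batch average $\widehat{\nabla}_{\theta} L_{\lambda^l}(\theta^{l,k}) = \tfrac{1}{M}\sum_{i=1}^{M}\widehat{\nabla}_{\theta}^{(i)}L_{\lambda^l}(\theta^{l,k})$, and then to invoke Corollary \ref{regret_reinforce_formal}. Once Assumption \ref{unbiased-boundedvar} is verified for this averaged estimator, the only extra work is converting between episode count $N$ and inner-iteration count $\lfloor N/M\rfloor$, which is exactly the arithmetic encoded in the regret definition \eqref{regret_def_batch}.

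Concretely, I would proceed in three steps. First, invoke Lemma \ref{mini-batch-assumption2} to conclude that the averaged estimator satisfies Assumption \ref{unbiased-boundedvar} with the same constants $C,\,C_1,\,C_2,\,M_2,\,\delta_{l,k}$ as in Lemma \ref{verify_basic_reinforce_base}, but with the second-moment constant replaced by $M_1 = \tfrac{32}{(1-\gamma)^4} + \tfrac{\bar V_b}{M}$. Consequently, Corollary \ref{regret_reinforce_formal} applies verbatim to the equivalent single-trajectory algorithm, except that the constant $\tilde{D}$ is replaced by $\tilde{D}_M$ (in which $\bar V_b$ becomes $\bar V_b/M$), while $\tilde{C}$ is unchanged. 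Second, set $N' := \lfloor N/M\rfloor$ and observe from \eqref{regret_def_batch} that
\[
\mathbf{regret}(N;M) \;\leq\; M \cdot \mathbf{regret}^{\text{single}}(N'-1) + \tfrac{M}{1-\gamma},
\]
where $\mathbf{regret}^{\text{single}}$ is the regret of the equivalent single-trajectory algorithm: the outer factor $M$ reflects that each of the $N'$ inner iterations contributes its per-iterate sub-optimality $M$ times, and the additive $M/(1-\gamma)$ absorbs the trailing $N - MN' < M$ boundary episodes (each of loss at most $1/(1-\gamma)$). Third, substitute the bound from Corollary \ref{regret_reinforce_formal} evaluated at $N'$ into the above display.

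Under the phase schedule $T_l = 2^l$, $\epsilon^l = 2^{-l/6}$, the dominant piece of $\mathbf{regret}^{\text{single}}(N')$ is of order $(N')^{5/6}(\log(N'/\delta))^{5/2}$ with a prefactor affine in $\tilde{D}_M$. Multiplying through by $M$ and using $(N/M)^{5/6}$, the $M$-independent portion of $\tilde{D}_M$ yields the leading $M \cdot (N/M)^{5/6} = M^{1/6}N^{5/6}$ term, while the variance-reducing piece $\bar V_b/M$ inside $\tilde{D}_M$ yields an additional $M \cdot \tfrac{\bar V_b}{M} \cdot (N/M)^{5/6} = M^{-5/6} N^{5/6}$ contribution; together these give the advertised $(M^{1/6} + M^{-5/6})(N+M)^{5/6}$ factor, where the $N+M$ tidies up the $\lfloor\cdot\rfloor$ and boundary remainder. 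The logarithmic piece $\tilde{R}_2(N')$, scaled by $M$, produces the $\tfrac{M(\log N)^2}{1-\gamma}$ term. The almost sure convergence with the stated asymptotic rate then follows immediately by applying Lemma \ref{highprob2as} to this anytime high-probability bound.

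The main obstacle is careful bookkeeping: tracking how $M$ enters through the reduced number of inner iterations $N/M$ (which, after the outer $M$, produces $M^{1/6}$), and \emph{separately} through the variance reduction $\bar V_b \mapsto \bar V_b/M$ inside $\tilde{D}_M$ (which yields the additional $M^{-5/6}$ correction), all while correctly handling the $N - M\lfloor N/M\rfloor$ trailing-episode remainder from \eqref{regret_def_batch}. Once this bookkeeping is in place, the result is a direct composition of Lemma \ref{mini-batch-assumption2}, Corollary \ref{regret_reinforce_formal}, and Lemma \ref{highprob2as}.
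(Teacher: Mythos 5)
Your proposal is correct and follows essentially the same route as the paper: verify Assumption \ref{unbiased-boundedvar} for the averaged estimator via Lemma \ref{mini-batch-assumption2} (with $M_1$ picking up the $\bar{V}_b/M$ variance reduction), reduce to the single-trajectory bound through $\mathbf{regret}(N;M)\leq M\,\mathbf{regret}(\lfloor N/M\rfloor)$ (your variant with the separate $M/(1-\gamma)$ remainder for the trailing episodes is an equivalent piece of bookkeeping), and conclude via Corollary \ref{regret_reinforce_formal} and Lemma \ref{highprob2as}. Your accounting of how the $M^{1/6}$ and $M^{-5/6}$ factors arise from the outer factor of $M$, the reduced iteration count $N/M$, and the $\bar{V}_b/M$ term inside $\tilde{D}_M$ matches the paper's derivation.
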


\begin{proof}[Proof of Corollary \ref{regret_reinforce_batch}]
By the definition of $\mathbf{regret}(N;M)$, we immediately see that
\BEQ\label{two_regrets}
\mathbf{regret}(N;M)\leq M\mathbf{regret}(\lfloor N/M\rfloor),
\EEQ
where $\mathbf{regret}(J)$ ($J\geq 0$) is the original regret \eqref{regret_def} in the mini-batch setting, which
is defined only for the total number of inner iterations/steps (instead of episodes, so not magnified
with a factor of $M$). More precisely, we have that for any $J\geq 0$,
\[
\mathbf{regret}(J)=\sum\nolimits_{\{(l,k)|B_{\mathcal{T}}(l,k)\leq J\}}F^\star-F(\pi_{\theta^{l,k}}).
\]

Now by Lemma \ref{mini-batch-assumption2} and Theorem \ref{overall_regret}, and following
the same simplification as is done in Corollary \ref{regret_reinforce_formal}, we have that  
for any
$J\geq 0$,
\[
\begin{split}
&\mathbf{regret}(J)\\
&\leq\left(\dfrac{4(\tilde{D}_M+\sqrt{2\tilde{C}
((\log_2(J+1)+2)\log2+\log(1/\delta))})}{(1-\gamma)\underline{E}}+\left\|\dfrac{d_\rho^{\pi^\star}}{\rho}\right\|_\infty\right)
(J+1)^{\frac{5}{6}}(\log_2 (2J+3))^2\\
&\quad+
\frac{\gamma(\log_2(J+1)+1)^2}{1-\gamma},
\end{split}
\]
where the constants are as stated in the Corollary claims. 

The proof is then complete by plugging the bound of $\mathbf{regret}(J)$ above into
\eqref{two_regrets} and invoking Lemma \ref{highprob2as}. 
\end{proof}

{\linespread{1.29}
\begin{algorithm}[h]
\caption{\textbf{Mini-Batch Phased Policy Gradient Method}}
\label{PG_phased_batch}
\begin{algorithmic}[1]
\STATE {\bfseries Input:} initial parameter $\tilde{\theta}^{0,0}$,
step-sizes $\alpha^{l,k}$, regularization parameters $\lambda^l$, phase lengths
$T_l$ ($l,\,k\geq 0$), post-processing tolerance $\epsilon_{\rm pp}$ and batch size $M>0$.
\STATE Set $\theta^{0,0}=\texttt{PostProcess}(\tilde{\theta}^{0,0},\epsilon_{\rm pp})$.
\FOR{phase $l=0, 1, 2,\dots$}
\FOR{step $k=0,1,\dots,T_l-1$}
\STATE Choose $H^{l,k}$, sample IID trajectories $\{\tau_i^{l,k}\}_{i=1}^M$
(each with horizon $H^{l,k}$)
from $\mathcal{M}$  following policy $\pi_{\theta^{l,k}}$, and
 compute an approximate gradient $\widehat{\nabla}_{\theta}^{(i)}
 L_{\lambda^{l}}({\theta^{l,k}})$
 of $L_{\lambda^l}$ for each trajectory $\tau_i^{l,k}$ ($i=1,\dots,M$).
\STATE Update $\theta^{l,k+1}=\theta^{l,k}+\alpha^{l,k}\frac{1}{M}\sum_{i=1}^M
\widehat{\nabla}_{\theta}^{(i)}
L_{\lambda^{l}}({\theta^{l,k}})$. 
\ENDFOR
\STATE Set $\theta^{l+1,0}=\texttt{PostProcess}(\theta^{l,T_l},\epsilon_{\rm pp})$.
\ENDFOR
\end{algorithmic}
\end{algorithm}
}

\section{Related work}\label{related_work}
Policy gradient methods are a large family of algorithms for
reinforcement learning that directly operate on the agent policy, rather than on the action-value function \cite{glynn1986stochastic,
sutton2018reinforcement}. Examples of policy gradient methods include REINFORCE \cite{REINFORCE},
A3C \cite{A3C}, DPG \cite{DPG}, PPO
\cite{PPO}, and TRPO \cite{TRPO}, to name just a few.
These methods seek to directly maximize the cumulative reward as a function of the policies,
they are straightforward to implement and are amenable to function approximations.
The asymptotic convergence of
(some) policy gradient methods to a stationary point has long been established 
\cite{sutton2000policy, actor-critic, marbach2001simulation, baxter2001infinite}.
The rate of such convergence is also known and has been improved more recently, 
with the help of variance reduction
\cite{SVRPG, xu2020improved, xu2019sample}, Hessian information \cite{HAPG}
and momentum techniques \cite{AdamPG, SRMPG, prox_PG, momentum_PG}. In contrast, 
until recently, 
the global convergence of (vanilla) policy gradient methods (like REINFORCE) 
had not been established unless unrealistic assumptions
like concavity of the expected cumulative reward function \cite{ma2016online}
are imposed. 
 The only 
exceptions are TRPO \cite{neu2017unified} and the soft-max natural policy gradient method with 
fully known models \cite{kakade_2019}, 
which happen
to be equivalent to the MDP Expert algorithms \cite{even2004experts, 
even2009online, neu2010online}.  

In the past two years, a line of research on the global convergence theory for 
(both vanilla and natural) policy gradient
methods has emerged.
By using a gradient domination property of the cumulative reward,
global convergence of (both vanilla and natural) policy gradient methods is first 
established for linear-quadratic
regulators \cite{PG_LQR}. For general Markov
Decision Processes (MDPs), \cite{zhang2019global} establishes convergence to
approximately locally optimal (\ie, second-order stationary) solutions for
vanilla policy gradient methods.
The global optimality of stationary points for general
MDPs is shown in \cite{russo_PG}, and rates of convergence towards
globally optimal solutions for (both vanilla and natural) policy gradient methods 
with (neural network) function approximation
 are derived in \cite{kakade_2019, neural_PG}. These convergence results are then
 improved by several very recent works focusing
 on exact gradient settings. In particular, \cite{softmax_PG}
 focuses on the more practically relevant soft-max parametrization and vanilla policy gradient
 methods and improves the results
 of \cite{kakade_2019} by removing the requirement of the relative entropy regularization and
 obtaining faster convergence rates; \cite{russo_note_2020} obtains linear
  convergence for a general class of policy gradient methods;  \cite{cen2020fast}
  shows local quadratic convergence of natural policy gradient methods; and 
  \cite{zhang2020variational} 
  extends the results to reinforcement learning with general utilities.  
  For more modern
  policy gradient methods, 
  \cite{zhao2019stochastic} establishes the
  asymptotic global
  convergence of of TRPO, 
  while \cite{liu2019neural} further derives the global convergence rates for PPO
  and TRPO. These rates are then improved in \cite{shani2019adaptive} for TRPO with adaptive
  regularization terms.  Very recently, \cite{fu2020single} extends these results to obtain the
  global convergence rates of single-timescale actor-critic methods with PPO actor updates, 
  and \cite{PC-PG} derives global convergence rates of a new policy gradient algorithm combining
  natural policy gradient methods with a policy cover technique and show that the algorithm entails
  better exploration behavior and hence removes the necessity
  for the access to a fully supported initial distribution $\rho$, which is
assumed in most other works on
  global convergence of policy gradient methods (including our work). 
   All the above works either require exact and deterministic updates
  or 
  mini-batch updates 
  with a diverging mini-batch size. 
  
  Lately, \cite{jin2020analysis} studies vanilla policy gradient methods using the REINFORCE 
  gradient estimators computed with a single trajectory in each episode and obtains high probability
  sample complexity results, but the setting is restricted
  to linear-quadratic regulators and their bounds have polynomial dependency on $1/\delta$ 
  (in contrast to our logarithmic dependency on $1/\delta$), where $\delta$ is the probability that the 
  bounds are violated.  
  The authors of \cite{POLITEX} study natural policy gradient methods with a general high
  probability estimation oracle for state-action value functions (\ie, $Q$-functions) in the average
  reward
  settings, and establish
  high probability regret bounds for these algorithms.  
  Finally, we remark that there are also some recent results on the global convergence rates of
  natural policy gradient methods in adversarial settings (with full-information feedback)
  \cite{OPPO}, 
  model-based natural policy gradient methods \cite{efroni2020optimistic} as well as 
  extensions to non-stationary \cite{fei2020dynamic} and  
  multi-agent game settings \cite{zhang2019policy, mazumdar2019policy, carmona2019linear, 
  fu2019actor, guo2020general},  which
  are beyond the scope of this paper.

\end{document}